\documentclass{article}

\usepackage[accepted]{icml2026}
\usepackage{url}

\usepackage[utf8]{inputenc} 
\usepackage[T1]{fontenc}    
\usepackage{url}            
\usepackage{booktabs}       
\usepackage{algorithmicx} 
\usepackage{amsfonts}       
\usepackage{nicefrac}       
\usepackage{microtype}      
\usepackage{xcolor}         
\usepackage{amsmath}
\usepackage{graphicx}
\usepackage[colorlinks=true, allcolors=blue]{hyperref}
\usepackage{algorithmicx}
\usepackage{multicol}
\usepackage{url}
\usepackage{multirow}
\usepackage{lscape}
\usepackage{wrapfig}
\usepackage{multicol}
\usepackage{subfigure}
\usepackage{subcaption}
\usepackage{amsmath}
\usepackage{amsthm}
\usepackage{algpseudocode}
\newtheorem{lemma}{Lemma}
\newtheorem{example}{Example}

\newtheorem{remark}{Remark}
\usepackage{pgfplots}
\usepackage{natbib}
\usepackage{caption}
\usepackage{amsmath}
\usepackage{amssymb}
\usepackage{mathrsfs}
\usepackage{minted}
\usepackage{listings}
\usepackage{tcolorbox}
\usepackage{polynom}
\usepackage{float}
\usepgfplotslibrary{external}
\usepackage{multirow}
\definecolor{lavender}{rgb}{0.9, 0.9, 0.98}
\usepackage{colortbl}

\usepackage[capitalize,noabbrev]{cleveref}

\usepackage[textsize=tiny]{todonotes}

\icmltitlerunning{FedReLa: Imbalanced Federated Learning via Re-Labeling}

\begin{document}

\twocolumn[
  \icmltitle{FedReLa: Imbalanced Federated Learning via Re-Labeling}



  \icmlsetsymbol{equal}{*}

  \begin{icmlauthorlist}
    \icmlauthor{Guangzheng Hu}{sch}
    \icmlauthor{Patricia Men\'endez}{sch}
    \icmlauthor{Feng Liu}{yyy}
    \icmlauthor{Mingming Gong}{sch}
    \icmlauthor{Guanghui Wang}{xxx}
    \icmlauthor{Liuhua Peng}{sch}
  \end{icmlauthorlist}

  \icmlaffiliation{yyy}{School of Computing and Information Systems, University of Melbourne, Victoria, Australia}
  \icmlaffiliation{xxx}{School of Statistics and Data Science, LPMC, KLMDASR, and LEBPS, Nankai University}
  \icmlaffiliation{sch}{School of Mathematics and Statistics, University of Melbourne, Victoria, Australia}

\icmlcorrespondingauthor{Liuhua Peng}{liuhua.peng@unimelb.edu.au}

  \icmlkeywords{Supervised Learning, Imbalanced Learning, Long-tailed Learning, Federated Learning}

  \vskip 0.3in
]



\printAffiliationsAndNotice{}  

\begin{abstract}
  Federated learning has emerged as the foremost approach for decentralized model training with privacy preservation. 
  The global class imbalance and cross-client data heterogeneity naturally coexist, and the mismatch between local and global imbalances exacerbates the performance degradation of the aggregated model. 
  The agnosticism of global class distribution poses significant challenges for data-level methods, especially under extreme conditions with severe class absence across clients.
  In this paper, we propose FedReLa, a novel data-level approach that tackles the coexistence of data heterogeneity and class imbalance in federated learning. By re-labeling samples with a feature-dependent label re-allocator, FedReLa corrects biased global decision boundaries without requiring knowledge of the global class distribution. This modular, model-agnostic approach can be integrated with algorithmic methods to deliver consistent improvements without additional communication overhead. Through extensive experiments, our method significantly improves the accuracy of minority classes and the overall accuracy on stepwise-imbalanced and long-tailed datasets, outperforming the previous state of the art.
\end{abstract}
\section{Introduction}

Federated learning (FL) facilitates collaborative model training across distributed clients without exchanging raw data, thereby preserving data privacy. Each client trains a model locally on private data and only shares parameter updates to a global server. Due to variations in client environments, such as differences in participation capacity, and geographic or demographic factors, data often exhibit significant heterogeneity over clients, leading to disparate model updates and suboptimal global model \citep{fedearly}. 

Imbalanced or long-tailed distributed data is common in real-world applications \citep{behavior,disease,fraud}
and even more prevalent in FL. Due to client-level data heterogeneity, two types of imbalance often coexist: local imbalance (within individual clients) and global imbalance (across the entire federation), both of which pose challenges for FL classification. 

Although both data- and algorithm-level methods are important for addressing class imbalance, research on imbalanced federated learning has largely focused on algorithm-level methods. Data-level approaches remain underdeveloped due to strict privacy constraints. Early algorithm-level work primarily tackles local imbalance via improved aggregation \citep{fedavg,fednova}, robust local training \citep{feddyn, scaffold, moon, fedprox}, selective client participation \citep{fedselect, fedselect1}, or architectural enhancements \citep{astraea}, while assuming all the classes are equally represented globally, a condition rarely met in practice.
Recent algorithmic works focus on more realistic scenarios where global class imbalance (e.g., step-wise or long-tailed distributions) \textit{coexists} with data heterogeneity. Ratio-loss \citep{ratioloss} and CLIMB \citep{climb} pioneered solutions for global step-wise imbalance and non-IID (not independent and identically distributed) client data. Subsequent studies \citep{fedrod, Fed-ETF, CReFF, FedLoGe, Fed-GraB} further tackled federated long-tailed (Fed-LT) learning.

Existing data-level methods (e.g., SMOTE \citep{smote123}) rely on global class prior information to identify minority classes before synthesizing new samples. However, in FL, local class priors are private, making access to global information or exchange of local class distributions impossible. Even in scenarios such as fraud detection \citep{fraud} or rare disease diagnosis \citep{cancer}, where global minority classes can be naturally identified, insufficient or even absent local minority samples in clients render synthesis unfeasible, leaving data-level methods underdeveloped.

To fill the gap in data-level FL methods, this paper proposes a novel data-level method that \textbf{eliminates the reliance on global class prior knowledge}, incurs \textbf{no additional communication overhead}, and involves \textbf{no training overhead} caused by extra trainable parameters, so as to improve the FL performance in challenging scenarios where both data heterogeneity and global class imbalance coexist. 

Unlike traditional data-level methods, such as SMOTE-based \citep{smote123, ADASYN} or mixup-based approaches \citep{ReMix, Selmix123}, our method operates purely in the label space, without synthesizing new features. Our novel method re-labels local data through a carefully designed feature-dependent label re-allocator that leverages minority-class information embedded in the global model.
Specifically, it identifies local majority class samples that intrude into the global minority-class feature space and implements asymmetric selective relabeling, implicitly enlarging the minority-class decision boundary. As a data-level approach for heterogeneous and class-imbalanced data in \textbf{Fed}erated Learning via \textbf{Re}-\textbf{La}beling, FedReLa has three key characteristics:

(i) \textbf{Model-Data Agnosticism}: Unlike methods that rely on balanced auxiliary data or explicit class priors \citep{fraud, ratioloss}, FedReLa is agnostic to model architecture, data format, and class distribution, and inherently improves data quality without domain-specific constraints.

(ii) \textbf{Lightweight Plug-in Adaptation}: Unlike prior methods \citep{CReFF, climb, FedLoGe} that require extra communicational or training cost from optimizing/uploading newly introduced parameters, FedReLa re-purposes the global model as a label re-allocator without introducing new trainable parameters, requiring no extra training or communicational burden (see Appendix \ref{apdx:cost}).

(iii) \textbf{Universal Composability}:  Operating solely in the label space, FedReLa integrates seamlessly with algorithm-level approaches and delivers consistent performance gains.

We evaluate FedReLa on Fashion-MNIST, CIFAR-10, CIFAR-100, and ImageNet under step-wise and long-tailed class imbalance, across varying degrees of data heterogeneity.
FedReLa consistently improves existing algorithm-level methods, achieving state-of-the-art performance.
In the most extreme cases, FedReLa boosts minority/tail-class accuracy by up to 38.30\% (step-wise) and 30.17\% (long-tailed) while maintaining overall accuracy superiority (Tables \ref{tab:step} and \ref{tab:lt}). These results conclusively demonstrate the superiority and applicability of FedReLa in FL. The code is available at: \url{https://github.com/guangzhengh/FedReLa.git}.

\section{Related works}
\paragraph{Imbalanced Learning.}
Imbalanced learning has been extensively studied in centralized settings, where data-level methods augment minority classes by generating synthetic samples using generative models \citep{ACGAN,BAGAN}, SMOTE-based techniques \citep{smote123,borderline123,ADASYN}, or feature interpolation via mixup \citep{ReMix,Selmix123,mixup}. However, their effectiveness is severely limited in FL due to scarcity of minority samples and absence of local classes, as well as the unavailability of global class priors, which prevents reliable identification of global minority classes and adjustment of augmentation strength for prior data-level methods.


\paragraph{FL for data heterogeneity.}
FL performance is primarily influenced by model aggregation, client updates, and local data distributions. 
Numerous FL methods primarily focus on client update and model aggregation to mitigate the adverse effects of skewed local datasets. During model aggregation, FedAvg \citep{fedavg} and FedNova \citep{fednova} pioneered weighted averaging local models based on local dataset sizes or batch sizes. For client update, regularization terms are incorporated into loss functions to penalize discrepancies between global and local models \citep{moon,fedprox} or constrain inter-round model divergence \citep{feddyn}. SCAFFOLD \citep{scaffold} introduced control variates to correct biased local gradients. However, these methods perform suboptimally on global minority/tail classes, as they mainly tackle local imbalance from data heterogeneity but neglect global class imbalance.

\paragraph{Data heterogeneity with global class imbalance.}
Several methods extend FL to settings with global imbalance, primarily via loss reweighting. Ratio-Loss \citep{ratioloss} estimates global class priors using auxiliary balanced data, while CLIMB \citep{climb} removes this requirement by introducing learnable loss weights, at the cost of increased training and communication overhead. 
Recent studies extend the scope from step-wise imbalance to long-tailed
distributions. CReFF \citep{CReFF} boosts tail-class performance via aggregated class feature retraining but incurs extra training and doubled communication costs, while FedROD \citep{fedrod} decouples global and local objectives yet may yield suboptimal global models when aggressively balancing local losses. FedETF \citep{Fed-ETF} enforces balanced feature learning via a fixed ETF (Equiangular Tight Frame) classifier head.

Based on the observation that head classes tend to have larger weight norms, FedGraB \citep{Fed-GraB} rescales the gradients of local models by class weight norms to enhance tail-class performance. As a follow-up, FedLOGE \citep{FedLoGe} further integrates the idea of \citet{Fed-ETF} by rescaling the weights of the fixed ETF classifier using the weight norms of an auxiliary classifier head. Despite this, our empirical findings in Section \ref{sec:experiments} reveal that weight norms become unreliable under high heterogeneity.

\paragraph{Other label-related concepts.} FedReLa is related to, but distinct from, learning with noisy labels~\cite{noisedeep,noisetolar2,noisesurvey} and label propagation~\cite{labelpro,labelpro1}.
Noisy-label methods typically aim to suppress harmful label corruption so that training is not misled, whereas FedReLa deliberately applies a controlled, minimal relabeling step to reshape biased decision boundaries induced by federated long-tailed and non-IID data.
Label propagation, in contrast, transfers information from labeled to unlabeled instances via graph-based similarity, often requiring global graph construction; FedReLa instead relabels local \emph{labeled} majority examples into minority classes and operates without building a global propagation graph.
FedReLa is a federated boundary-correction strategy tailored to class imbalance.
We therefore position FedReLa alongside these lines of work while emphasizing its novelty as intentional, locally executed relabeling for federated imbalance rather than noise removal or graph-based label diffusion.

\textbf{Motivations:} Existing methods tackle data imbalance through algorithmic adjustments. Why not improve local data quality directly? The reason is apparent: conventional data augmentation relies on global data distribution knowledge, which violates FL privacy constraints. The most relevant work, FedMix \citep{FedMix}, addresses heterogeneity using mixup. Still, it requires clients to share local data averages, which may require additional privacy-preserving mechanisms and increase communication costs. To our knowledge, no data-level FL approach mitigates the coexistence of data heterogeneity and global imbalance while achieving: (1) not requiring auxiliary datasets, (2) having a negligible additional computation cost with zero communication cost and no extra parameter training burden, and finally (3) being agnostic to global data distribution. 
This motivates FedReLa, a data-level method that simultaneously achieves all of the above requirements while significantly improving performance under extreme conditions.


\section{Federated Learning via Re-Labeling}\label{sec:theory}

In this section, we first analyze how local and global imbalances affect decision boundaries and why heterogeneity in globally imbalanced data exacerbates the performance impact of imbalanced data on global models. We then introduce the label re-allocator and analyze how re-labeled samples implicitly rebalance the biased global decision boundaries.

\subsection{Problem Formulation}

Consider a dataset $\mathcal{D}$ that contains data pairs $(X,Y) \sim P(x,y)$, where $x \in \mathcal{X} \subseteq \mathbb{R}^d$, $y \in \mathcal{Y} = \{1, 2, \ldots, C\}$  and $P$ represents the joint distribution.  
Denote the conditional distribution $X \mid Y = j \sim P_j(x)$ and the prior probability $\Pr(Y=j)=\pi_j$ for class $j \in \mathcal{Y}$. The marginal distribution of $X$ is then $P_X(x) = \sum_{j\in\mathcal{Y}} \pi_jP_j(x)$. Assume $\mathcal{D}$ is imbalanced 
with global imbalance ratio $\mathrm{IR}(\mathcal{D}) = \max_{j\in\mathcal{Y}}\pi_j/\min_{j\in\mathcal{Y}}\pi_j \gg 1$. 
Let $\eta_j(x) = \Pr(Y=j\mid X=x) = \pi_jP_j(x)/P_X(x)$ be the global posterior probability. Recalling that the Bayesian decision theorem \citep{bdt} defines the optimal estimated $y^*$ of a sample $x$ as $y^* = \mathrm{argmax}_{j\in\mathcal{Y}} \eta_j(x)$, the following result holds.

\begin{lemma}\label{lem:01}
    The optimal Bayesian decision boundary between two classes $j\neq \ell \in\mathcal{Y}$ is $S_{j,\ell} 
        \notag =  \left\{ x\in \mathcal{X}: \eta_j(x) = \eta_{\ell}(x) > \eta_{\ell^{\prime}}(x) ~~ \forall ~\ell^{\prime} \in \mathcal{Y} \setminus \{j,\ell\} \right\}$.
\end{lemma}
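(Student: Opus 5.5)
The argument unpacks what ``decision boundary between $j$ and $\ell$'' means for the Bayes rule $y^*(x)=\mathrm{argmax}_{k\in\mathcal{Y}}\eta_k(x)$, and shows the stated set is exactly that boundary. Define the Bayes decision regions $R_k=\{x\in\mathcal{X}:\eta_k(x)>\eta_{k'}(x)\ \forall k'\neq k\}$. The boundary between classes $j$ and $\ell$ is the set of points where the rule is indifferent between $j$ and $\ell$ and these two classes jointly dominate all others. Equivalently, it is the set of points where $\arg\max_k\eta_k(x)=\{j,\ell\}$, i.e., the interface separating $R_j$ from $R_\ell$. I will take this as the working definition and prove that it coincides with $S_{j,\ell}$ by showing both inclusions.

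\emph{First inclusion.} Take $x\in S_{j,\ell}$. Then $\eta_j(x)=\eta_\ell(x)$, so the rule cannot strictly prefer either class. Moreover $\eta_j(x)>\eta_{\ell'}(x)$ for every $\ell'\notin\{j,\ell\}$, so the maximizing set is exactly $\{j,\ell\}$. Hence $x$ is a tie point between $j$ and $\ell$ alone.

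\emph{Converse.} Suppose the maximizing set at $x$ is exactly $\{j,\ell\}$. Then $\eta_j(x)=\eta_\ell(x)=\max_k\eta_k(x)$, and every other $\eta_{\ell'}(x)$ is strictly smaller. That is precisely the defining condition of $S_{j,\ell}$.

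It is also worth noting that $\eta_j(x)=\eta_\ell(x)$ is equivalent to $\pi_jP_j(x)=\pi_\ell P_\ell(x)$ wherever $P_X(x)>0$. This follows from $\eta_k=\pi_kP_k/P_X$. The reformulation is what later links the boundary to the priors $\pi_j$.

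The main obstacle is the topological reading of ``boundary''. If one instead defines the boundary as $\partial R_j\cap\partial R_\ell$, then showing that points of $S_{j,\ell}$ are limits of both regions needs some continuity of the $\eta_k$ together with non-degeneracy, i.e., $\eta_j-\eta_\ell$ changes sign across the set. Without such a condition, $\eta_j=\eta_\ell$ could hold on a set with nonempty interior. I would handle this by adopting the tie-set definition above, which is the standard Bayesian-decision-theory one. If needed, I would add a remark that, under continuity of the $P_k$, the strict inequalities $\eta_j>\eta_{\ell'}$ persist on a neighborhood of $x$, so locally only $j$ and $\ell$ compete. The $C$-class problem then reduces locally to the binary boundary $\{\eta_j=\eta_\ell\}$.
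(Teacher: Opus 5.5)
Your argument is correct and follows the same route as the paper: the paper gives no separate proof and reads the lemma directly off the Bayes rule $y^*=\mathrm{argmax}_{j\in\mathcal{Y}}\eta_j(x)$, i.e.\ as the set where $\arg\max_k\eta_k(x)=\{j,\ell\}$, which is your working definition and double inclusion. Your caveat that the topological reading $\partial R_j\cap\partial R_\ell$ would need continuity and a sign change of $\eta_j-\eta_\ell$ is a fair refinement the paper does not address, and your reformulation $\pi_jP_j(x)=\pi_\ell P_\ell(x)$ is the one the paper uses right after the lemma.
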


For $x\in S_{j,\ell}$, $\eta_j(x) = \eta_{\ell}(x)$ implies {$P_j(x)/P_{\ell}(x)= \pi_{\ell}/\pi_j$. 
Then for minority class $j$ and majority class $\ell$ with $\pi_j \ll \pi_{\ell}$, $S_{j,\ell}$ intrudes deeply into the minority class region, increasing the risk of misclassifying minority class samples.
This motivates balancing the ratio $\pi_{\ell}/\pi_j$ to \textit{shift the decision boundary back towards the majority class region}, thereby alleviating the adverse effects of class imbalance. 

In FL, the dataset $\mathcal{D}$ 
is distributed on $K$ clients with local datasets $\{\mathcal{D}^{(k)}\}_{k=1}^{K}$ and
assumes the class conditional distributions $\{P_{j}^{(k)}(x)\}_{k=1}^{K}$ are identical across all clients for each class $j$. 
In contrast, the class priors $\{\pi_{j}^{(k)}\}_{k=1}^{K}$ may be different among clients due to data heterogeneity. For classes $j$ and $\ell$, we have
$P_{j}^{(1)}(x)/P_{\ell}^{(1)}(x) = \cdots = P_{j}^{(K)}(x)/P_{\ell}^{(K)}(x)=P_j(x)/P_{\ell}(x)$. 
However, divergent class priors 
result in different local posterior probability $\eta_{j}^{(k)}(x)=\pi_{j}^{(k)}P_{j}(x)/\sum_{j_0\in\mathcal{Y}}\pi_{j_0}^{(k)}P_{j_0}(x)$ and misaligned Bayesian decision boundaries among clients.
This misalignment affects performance of aggregated classifier and slows down convergence of FL algorithms \citep{fedearly}.

Ideally, with properly chosen aggregation weights $\{w_k\}_{k=1}^{K}$, the decision boundary between classes $j$ and $\ell$ of the global aggregated model $\eta_{j}^{[w]}(x)=\sum_{k=1}^{K}w_k\eta_{j}^{(k)}(x)$ given by 
$$S_{j,\ell}^{[w]} =  \{x\in \mathcal{X}: P_j(x)/P_{\ell}(x)=\pi_{\ell}^{[w]}/\pi_j^{[w]}\}, $$ 
which can match $S_{j,\ell}$ in Lemma \ref{lem:01} by making $\pi_j^{[w]} = \pi_j$ and $\pi_{\ell}^{[w]} = \pi_{\ell}$, where $\pi_j^{[w]} = \sum^K_{k=1} w_k\pi_{j}^{(k)}/\sum^K_{k=1} w_k$ and $\pi_{\ell}^{[w]} = \sum^K_{k=1} w_k\pi_{\ell}^{(k)}/\sum^K_{k=1} w_k$.
For instance, setting $w_k = |\mathcal{D}^{(k)}|/|\mathcal{D}|$ achieves this alignment.
However, the global imbalance ratio $\pi_{\ell}/\pi_j$ still introduces bias into the aggregated decision boundary of the global model.
To address that, several algorithms \citep{logit,lossreweight1} have been proposed to adjust the ratio $\pi_{\ell}^{[w]}/\pi_j^{[w]}$ via alternative weighting schemes. 
Moreover, data heterogeneity can cause mismatches between global and local imbalance ratios, further complicating the class imbalance issue and amplifying bias in the aggregated decision boundary. 
{\color{black} See Example~\ref{example:01} in the Appendix for an illustration.}

\begin{remark}
    We adopt a global model based on posterior aggregation rather than parameter aggregation for two reasons. First, posterior aggregation renders changes in the decision boundary explicit and easier to quantify, which is essential for analyzing the effect of re-labeling in the next section. Second, it aligns with principles of statistical model averaging, offering a flexible and natural framework for heterogeneous FL.
    Furthermore, the two aggregation paradigms are approximately equivalent under mild regularity conditions,
    see Appendix \ref{appendix:A} for detailed discussions.
\end{remark}

\subsection{Aggregated decision boundary with re-labeled data} \label{sec:2.2}

We introduce a novel data-level approach that reallocates data labels to adjust the decision boundary by balancing class prior ratios at both local and global levels. This strategy also alleviates the mismatch between global and local imbalance ratios, and improves the robustness of the FL model.
Our proposed FedReLa is motivated by how re-labeling shifts decision boundaries locally and globally.
We first analyze its effect on a local client $k$, and then extend the discussion to model aggregation.
Without loss of generality, we consider a binary classification setting where classes $j$ and $\ell$ represent the minority and majority classes.

Let $(X^{(k)},Y^{(k)}) \sim P^{(k)}(x,y)$ denote the data pair for client $k$ with re-labeled $\widetilde{Y}^{(k)}$ and consider $\widetilde{\mathcal{D}}^{(k)}$ the corresponding re-labeled dataset.
Denote the probabilities of re-labeling $\ell$ to $j$ as $\rho_{\ell\to j}^{(k)}(x) = \Pr(\widetilde{Y}^{(k)} = j \mid X^{(k)} = x, Y^{(k)} = \ell)$ and re-labeling $j$ to $\ell$ as $\rho_{j\to\ell}^{(k)}(x) = \Pr(\widetilde{Y}^{(k)} = \ell \mid X^{(k)} = x, Y^{(k)} = j)$ for client $k$, then 
\begin{align}
    \notag \widetilde{\eta}_{j}^{(k)}(x) & = \Pr(\widetilde{Y}^{(k)} = j \mid X^{(k)} = x)\\
    &\notag = \eta_{j}^{(k)}(x)[1 - \rho_{j\to\ell}^{(k)}(x)] + [1-\eta_{j}^{(k)}(x)]\rho_{\ell\to j}^{(k)}(x).
\end{align}

\begin{lemma}\label{lem:bound_noise_k}
    Assume $\rho_{\ell\to j}^{(k)}(x^*) \leq 0.5$ and $\rho_{j\to\ell}^{(k)}(x^*) \leq 0.5$ for any $x^*\in \widetilde{S}^{(k)}$, then the optimal Bayesian decision boundary based on $\widetilde{\mathcal{D}}^{(k)}$ for client $k$ is \\ $\widetilde{S}^{(k)} = \left\{x^{*} \in \mathcal{X}: \frac{P_{j}(x^*)}{P_{\ell}(x^*)} = \frac{1-2\rho_{\ell\to j}^{(k)}(x^*)}{1-2\rho_{j\to\ell}^{(k)}(x^*)}\cdot\frac{\pi_{\ell}^{(k)}}{\pi_{j}^{(k)}}\right\}$.
\end{lemma}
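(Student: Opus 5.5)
In the binary setting for client $k$, the Bayes-optimal rule on the re-labeled data $\widetilde{\mathcal{D}}^{(k)}$ picks class $j$ exactly when $\widetilde{\eta}_j^{(k)}(x) > \widetilde{\eta}_\ell^{(k)}(x) = 1-\widetilde{\eta}_j^{(k)}(x)$. So, by Lemma~\ref{lem:01} applied to the re-labeled distribution, the boundary is the set where $\widetilde{\eta}_j^{(k)}(x^*) = 1/2$. The plan is to rewrite this equation in terms of $P_j/P_\ell$ and the local priors.

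\textbf{Step 1.} Starting from the displayed expression for $\widetilde{\eta}_j^{(k)}$, I will write $\eta:=\eta_j^{(k)}(x)$, $a:=\rho_{\ell\to j}^{(k)}(x)$ and $b:=\rho_{j\to\ell}^{(k)}(x)$. Then
\[
\widetilde{\eta}_j^{(k)} = \eta(1-b) + (1-\eta)a .
\]
Setting this equal to $1/2$ gives $\eta(1-a-b) = \tfrac12 - a$. Equivalently, the condition $\widetilde{\eta}_j^{(k)}=\widetilde{\eta}_\ell^{(k)}$ can be written as
\[
\eta(1-2b) = (1-\eta)(1-2a).
\]
This holds because $\widetilde{\eta}_j^{(k)}-\widetilde{\eta}_\ell^{(k)} = 2\widetilde{\eta}_j^{(k)}-1 = \eta(1-2b) - (1-\eta)(1-2a)$, which one checks directly by expanding.

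\textbf{Step 2.} In the binary case $1-\eta = \eta_\ell^{(k)}$. I will divide by $\eta_\ell^{(k)}(1-2b)$ and use
\[
\eta_j^{(k)}/\eta_\ell^{(k)} = \pi_j^{(k)}P_j(x)/\big(\pi_\ell^{(k)}P_\ell(x)\big),
\]
which holds because the class conditionals are shared across clients and the marginal cancels. This yields
\[
\frac{P_j(x^*)}{P_\ell(x^*)} = \frac{1-2a}{1-2b}\cdot\frac{\pi_\ell^{(k)}}{\pi_j^{(k)}},
\]
which is the stated set $\widetilde{S}^{(k)}$.

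\textbf{Main obstacle: the role of the hypothesis $a,b\le 0.5$.} Two things need it.
\begin{itemize}
\item The division in Step 2 must be legitimate. Dividing by $1-2b$ needs $b<1/2$. In the degenerate case $b=1/2$, the equation reduces to $(1-\eta)(1-2a)=0$; I will treat it separately or interpret the ratio in the extended sense.
\item The set must separate the regions correctly. With $1-2a\ge0$ and $1-2b\ge0$, the sign of $\widetilde{\eta}_j^{(k)}-\widetilde{\eta}_\ell^{(k)}$ equals the sign of $\frac{P_j}{P_\ell} - \frac{1-2a}{1-2b}\frac{\pi_\ell^{(k)}}{\pi_j^{(k)}}$. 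If either quantity exceeded $1/2$, the inequality would flip and the decision regions would swap.
\end{itemize}
I will note this monotonicity explicitly, so that the level set is genuinely the boundary between the two decision regions and not merely a set where the posteriors happen to be equal.
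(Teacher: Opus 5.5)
Your proposal is correct and follows the paper's proof essentially step for step: equate $\widetilde{\eta}_j^{(k)}$ and $\widetilde{\eta}_\ell^{(k)}$, reduce to $\eta_j^{(k)}(1-2\rho_{j\to\ell}^{(k)})=\eta_\ell^{(k)}(1-2\rho_{\ell\to j}^{(k)})$, substitute $\eta_j^{(k)}\propto\pi_j^{(k)}P_j$, and rearrange. Your extra care about dividing by $1-2\rho_{j\to\ell}^{(k)}$ and about the sign of $\widetilde{\eta}_j^{(k)}-\widetilde{\eta}_\ell^{(k)}$ goes beyond the paper, which simply divides. One small correction: the sign equivalence only needs $1-2\rho_{j\to\ell}^{(k)}>0$. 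A value $\rho_{\ell\to j}^{(k)}(x)>1/2$ just makes the threshold negative at that $x$, so $x$ is assigned to class $j$; it does not swap the decision regions.
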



When $\pi_{\ell}^{(k)}/\pi_{j}^{(k)} \gg 1$, we seek to achieve $[1-2\rho_{\ell\to j}^{(k)}(x^*)]/[1-2\rho_{j\to\ell}^{(k)}(x^*)]<1$ to locally adjust the decision boundary.
Given the scarcity of minority class samples in $\mathcal{D}^{(k)}$, it is reasonable to restrict re-labeling to occur only from majority class $\ell$ to minority class $j$, and set $\rho_{j\to\ell}^{(k)}(x)=0$.
Furthermore, since deeply invaded majority class samples are especially harmful, we design a label re-allocator where the re-labeling probability is proportional to the degree of intrusion. Specifically, with
$\rho_{\ell\to j}^{(k)}(x)\propto \eta_{j}^{(k)}(x)$, the Bayesian decision boundary is \\
$\widetilde{S}^{(k)} = \left\{x^{*} \in \mathcal{X}: \frac{P_j(x^*)}{P_{\ell}(x^*)} =[1-2\rho_{\ell\to j}^{(k)}(x^*)] \frac{\pi_{\ell}^{(k)}}{\pi_{j}^{(k)}}\right\}$.

Since $1-2\rho_{\ell\to j}^{(k)}(x^*) < 1$, $\widetilde{S}^{(k)}$ on re-labeled data shifts back to the majority
class region.
Based on re-labeled data $\widetilde{\mathcal{D}}=\cup_{k=1}^{K}\widetilde{\mathcal{D}}^{(k)}$, we also study the decision boundary of the global aggregated model $\widetilde{\eta}_{j}^{[w]}(x)=\sum_{k=1}^{K}w_k\widetilde{\eta}_{j}^{(k)}(x)$.

\begin{figure*}[h]
\centering
\includegraphics[width=0.76\textwidth]{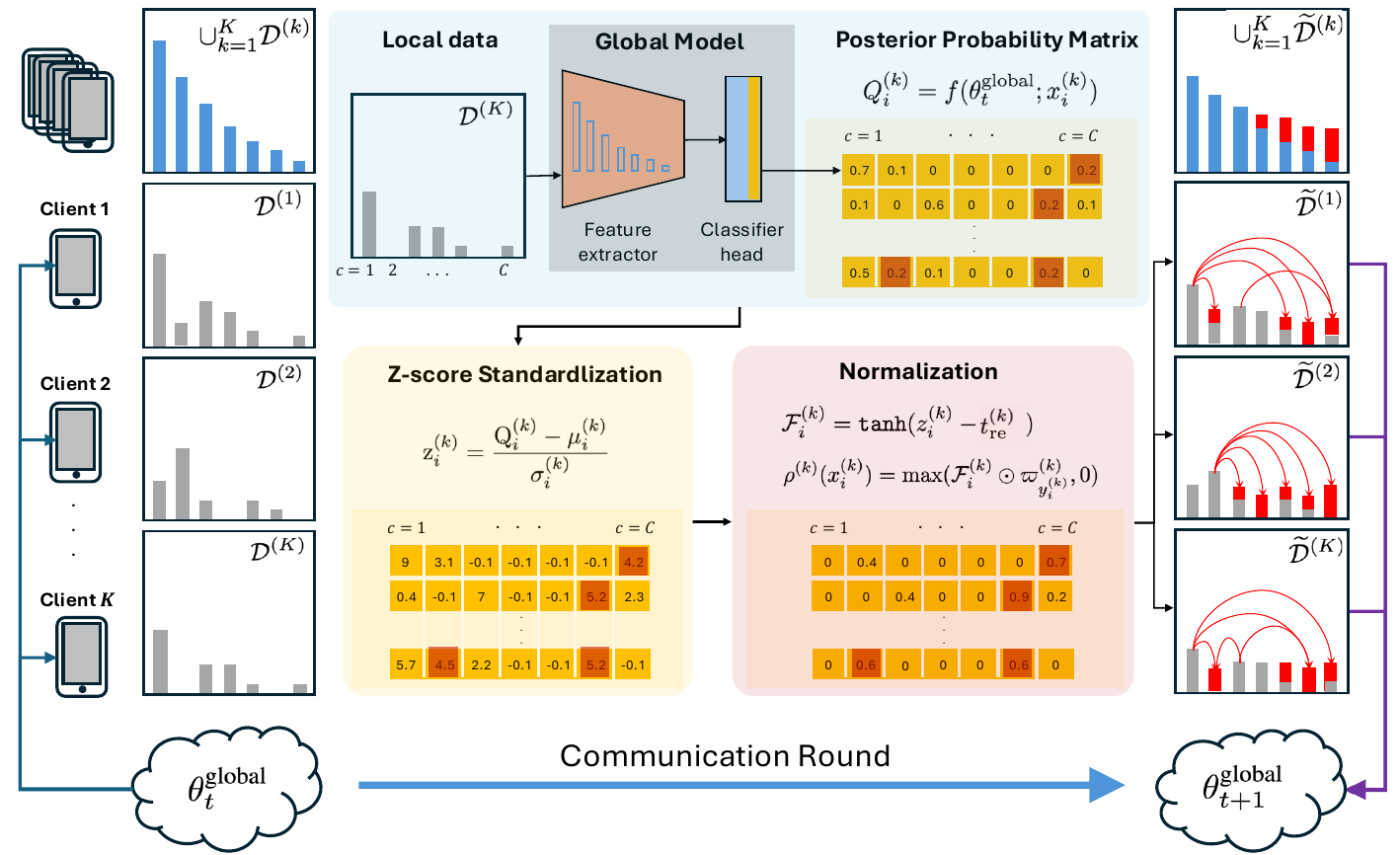}
\caption{FedReLa Framework. At round $t = T_{\text{relabel}}$, FedReLa re-labels the local dataset with the label re-allocator based on the global model before the local training starts.}
\label{fig:frame}
\end{figure*}

\begin{lemma}\label{lem:global_noisy}
    The optimal Bayesian decision boundary of the global aggregated model $\widetilde{\eta}_{j}^{[w]}(x)$ is
    \begin{align}
        \notag \widetilde{S}^{[w]} = \Bigg\{x^{*} & \in \mathcal{X}: \frac{P_j(x^*)}{P_{\ell}(x^*)} \\
        \notag & =  \frac{\sum_{k=1}^{K}w_k\pi_{\ell}^{(k)}[1-2\rho_{\ell\to j}^{(k)}(x)]/\pi_{\ell}}{\sum_{k=1}^{K}w_k\pi_{j}^{(k)}/\pi_j}\cdot\frac{\pi_{\ell}}{\pi_j}\Bigg\}.
    \end{align}
\end{lemma}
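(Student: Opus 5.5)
We work in the binary setting of Section~\ref{sec:2.2} with the asymmetric re-labeling scheme fixed there, i.e.\ $\rho_{j\to\ell}^{(k)}(x)=0$ for every client $k$. The boundary of the aggregated model is the set where $\widetilde{\eta}_{j}^{[w]}(x)=\widetilde{\eta}_{\ell}^{[w]}(x)$, in the sense of Lemma~\ref{lem:01} with $C=2$. The plan is to write both aggregated posteriors in terms of $P_j$, $P_\ell$ and the local priors, then clear denominators.

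First, we express the local re-labeled posteriors. With $\rho_{j\to\ell}^{(k)}=0$, the display preceding Lemma~\ref{lem:bound_noise_k} gives
\[
\widetilde{\eta}_{j}^{(k)}(x)=\eta_{j}^{(k)}(x)+\eta_{\ell}^{(k)}(x)\rho_{\ell\to j}^{(k)}(x),
\]
and therefore
\[
\widetilde{\eta}_{\ell}^{(k)}(x)=\eta_{\ell}^{(k)}(x)\bigl[1-\rho_{\ell\to j}^{(k)}(x)\bigr].
\]
Taking the difference yields
\[
\widetilde{\eta}_{j}^{(k)}(x)-\widetilde{\eta}_{\ell}^{(k)}(x)=\eta_{j}^{(k)}(x)-\eta_{\ell}^{(k)}(x)\bigl[1-2\rho_{\ell\to j}^{(k)}(x)\bigr].
\]
We then substitute
\[
\eta_{m}^{(k)}(x)=\frac{\pi_{m}^{(k)}P_m(x)}{\pi_j^{(k)}P_j(x)+\pi_\ell^{(k)}P_\ell(x)}, \qquad m\in\{j,\ell\},
\]
which uses the assumption that the class-conditional distributions are shared across clients.

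Second, we aggregate. The condition $\sum_k w_k\bigl(\widetilde{\eta}_{j}^{(k)}-\widetilde{\eta}_{\ell}^{(k)}\bigr)(x^*)=0$ becomes a weighted sum of terms, each carrying its own normaliser $Z_k(x)=\pi_j^{(k)}P_j(x)+\pi_\ell^{(k)}P_\ell(x)$.

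This is the main obstacle: the $Z_k$ differ across clients, so the equation does not reduce to the clean ratio in the statement. The target formula matches the equation obtained by dropping the $Z_k$, namely
\[
\sum_k w_k\pi_j^{(k)}P_j(x^*)=\sum_k w_k\pi_\ell^{(k)}\bigl[1-2\rho^{(k)}_{\ell\to j}(x^*)\bigr]P_\ell(x^*).
\]
This is the same simplification the paper already uses for the unrelabeled aggregated boundary $S^{[w]}_{j,\ell}$, which is stated purely in terms of $\pi^{[w]}$.

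I would handle this in one of two ways, stating the choice explicitly.
\begin{itemize}
\item \emph{Aggregation convention.} Follow the paper's convention that aggregation acts on the unnormalised scores $\pi_m^{(k)}P_m(x)$, equivalently on the joint densities. This is natural under the posterior-aggregation view of the Remark with weights chosen so that $\pi^{[w]}$ matches $\pi$. With this convention, the cancellation is exact.
\item \emph{Common normaliser.} Alternatively, absorb $1/Z_k(x)$ into the weights, using the weights $w_k\propto Z_k$-proportional rescaling that recovers the pooled posterior. I would first try to justify this as the paper does for $S^{[w]}_{j,\ell}$.
\end{itemize}

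Third, we rearrange. Dividing by $P_\ell(x^*)\sum_k w_k\pi_j^{(k)}$ gives
\[
\frac{P_j(x^*)}{P_\ell(x^*)}=\frac{\sum_k w_k\pi_\ell^{(k)}\bigl[1-2\rho^{(k)}_{\ell\to j}(x^*)\bigr]}{\sum_k w_k\pi_j^{(k)}}.
\]
Multiplying and dividing the right-hand side by $\pi_\ell/\pi_j$ puts it in the displayed form of the statement.

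Finally, we check the optimality direction. To the minority side of the boundary, $\widetilde{\eta}_j^{[w]}>\widetilde{\eta}_\ell^{[w]}$ holds because the difference is monotone in $P_j/P_\ell$. This monotonicity requires $1-2\rho_{\ell\to j}^{(k)}\ge 0$, which the hypothesis of Lemma~\ref{lem:bound_noise_k} supplies. It confirms that the set found is the Bayes decision boundary.
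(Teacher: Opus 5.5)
Your argument follows the paper's proof step for step. The paper writes $\widetilde\eta^{[w]}_j=\sum_k w_k\{\eta^{(k)}_j+\eta^{(k)}_\ell\rho^{(k)}_{\ell\to j}\}$ and $\widetilde\eta^{[w]}_\ell=\sum_k w_k\eta^{(k)}_\ell[1-\rho^{(k)}_{\ell\to j}]$. It equates them to get $\sum_k w_k\eta^{(k)}_j(x^*)=\sum_k w_k\eta^{(k)}_\ell(x^*)[1-2\rho^{(k)}_{\ell\to j}(x^*)]$. It then says that substituting $\eta^{(k)}_j=\pi^{(k)}_jP_j/\sum_{j_0}\pi^{(k)}_{j_0}P_{j_0}$ ``gives the desired result.''

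That last step silently cancels the client-dependent normalisers $Z_k(x)$. So the obstacle you flag is a genuine gap in the paper's own proof, not in yours. With constant $w_k$ and normalised local posteriors, the displayed formula is in fact false in general. Take two equal-size clients with $w_1=w_2=1/2$, $\rho\equiv 0$, $\pi^{(1)}_j=1/2$, $\pi^{(2)}_j=1/10$, and write $r=P_j(x)/P_\ell(x)$. Then $\eta^{(1)}_j=r/(r+1)$ and $\eta^{(2)}_j=r/(r+9)$. The condition $\tfrac12(\eta^{(1)}_j+\eta^{(2)}_j)=\tfrac12$ forces $r^2=9$, i.e.\ $r=3$, whereas the lemma predicts $r=0.7/0.3=7/3$, which is the pooled Bayes boundary. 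So your convention is a necessary extra hypothesis, or a redefinition of $\widetilde\eta^{[w]}$. You should state it as such rather than call it ``the paper's convention,'' since the paper explicitly defines $\widetilde\eta^{[w]}_j=\sum_k w_k\widetilde\eta^{(k)}_j$ as an average of normalised posteriors.

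Your two options are really one. Take $x$-dependent weights $w_k(x)=w_kZ_k(x)/\sum_{k'}w_{k'}Z_{k'}(x)$. Then $\sum_k w_k(x)\widetilde\eta^{(k)}_m(x)=\sum_k w_k\widetilde p^{(k)}_m(x)/\sum_{k'}w_{k'}Z_{k'}(x)$, where $\widetilde p^{(k)}_m$ are the re-labelled joint scores; their sum over $m$ is still $Z_k$, since re-labeling preserves the marginal. The denominator is common to both classes. Hence the boundary is exactly the stated one with the \emph{original} constants $w_k$. For $w_k=|\mathcal{D}^{(k)}|/|\mathcal{D}|$, the weight $w_k(x)$ is the probability that $x$ came from client $k$, so the aggregate is the posterior of the pooled re-labelled data.

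Your final direction check needs neither monotonicity nor $1-2\rho^{(k)}_{\ell\to j}\ge 0$. That condition is not a hypothesis of this lemma, and $\rho$ varies with $x$ anyway. Pointwise, $\widetilde\eta^{[w]}_j(x)>\widetilde\eta^{[w]}_\ell(x)$ holds iff $P_j(x)/P_\ell(x)$ exceeds the right-hand side evaluated at $x$. This follows by dividing by the positive quantity $P_\ell(x)\sum_k w_k\pi^{(k)}_j$.
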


Lemma \ref{lem:global_noisy} implies that the label re-allocator balances the global imbalance ratio when
\begin{equation}\label{eq:global_noise_condition}
    \frac{\sum_{k=1}^{K}w_k\pi_{\ell}^{(k)}[1-2\rho_{\ell\to j}^{(k)}(x)]/\pi_{\ell}}{\sum_{k=1}^{K}w_k\pi_{j}^{(k)}/\pi_j} < 1.
\end{equation}
By choosing $w_k=|\mathcal{D}^{(k)}|/|\mathcal{D}|$, $\sum_{k=1}^{K}w_k\pi_{j}^{(k)}/\pi_j=1$, and (\ref{eq:global_noise_condition}) reduces to 
$\sum_{k=1}^{K}w_k\pi_{\ell}^{(k)}[1-2\rho_{\ell\to j}^{(k)}(x)]/\pi_{\ell}<1$, which holds naturally when $\rho_{\ell\to j}^{(k)}(x)>0$ for all $k\in\{1,\ldots,K\}$.
Thus, the label re-allocator can balance both the local and global decision boundary.

Data heterogeneity gives rise to a mismatch between local and global class imbalances in federated learning, where the global minority class becomes the local majority class for some clients. To mitigate the issues caused by such mismatches, we determine the relabeling direction based on local class priors, which avoids unexpected relabeling and enables correction of both global and local decision boundaries. A detailed discussion and formal derivation of this mechanism are presented in Remark \ref{app:remark_data_hetero} of the Appendix.

\section{Framework of FedReLa}\label{sec:algorithm}

Motivated by decision boundary adjustment through data re-labeling, as analyzed in Section \ref{sec:2.2}, we propose FedReLa to mitigate performance degradation caused by data heterogeneity and class imbalance in FL.
FedReLa is a model-agnostic approach, which is designed as a plug-in module that can be seamlessly integrated into any FL algorithm.

As shown in Figure \ref{fig:frame}, 
FedReLa works as a local data one-shot preprocessor between communication rounds of any FL algorithm, with each client applying it locally and in parallel.
Specifically, before client $k$ starts to train the global model $f(\theta;x)$ with parameter $\theta=\theta^{\rm global}_{t}$ received at round $t=T_{\text{relabel}}$, FedReLa re-labels its local dataset $\mathcal{D}^{(k)}=\{(x_{i}^{(k)}, y_{i}^{(k)})\}_{i=1}^{n_k}$ using a client-specific label re-allocator $\rho^{(k)}$, resulting in the re-labeled dataset $\widetilde{\mathcal{D}}^{(k)}=\{(x_{i}^{(k)}, \widetilde{y}_{i}^{(k)})\}_{i=1}^{n_k}=\rho^{(k)}(\mathcal{D}^{(k)})$. Note that the computation of FedReLa only occurs in $T_{\text{relabel}}$, and the re-labeled local dataset $\widetilde{\mathcal{D}}^{(k)}$ can be reused in subsequent training rounds $t > T_{\text{relabel}}$. Thus, the one-shot computations at round $T_{\text{relabel}}$ for label re-allocators are lightweight and almost negligible to the whole training process. We discuss the computational cost of FedReLa in Appendix \ref{apdx:cost}.
Each client then updates the global model locally using $\widetilde{\mathcal{D}}^{(k)}$, and the server aggregates the local updates $\Delta \theta^{(k)}_{t}$ to produce the updated global model with parameter $\theta^{\rm global}_{t+1}$.

The inspiration of FedReLa is to ``reallocate'' the shared feature space that is encroached upon by the majority class (due to biased decision boundaries) to the minority class. 
This is achieved by selectively re-labeling the majority-class samples that intrude into the minority-class feature space with similar features as minority-class samples.
Building upon the analysis in Section \ref{sec:theory}, we let the re-labeling probabilities be proportional to the posterior probabilities of minority classes, and we utilize the global model distributed to each client to perform local inference.
This yields a $|\mathcal{D}^{(k)}|\times|\mathcal{Y}|$ posterior probability matrix $\mathbf{Q}^{(k)}$ for client $k$, where $$Q_i^{(k)} = f(\theta^{\rm global}_{t};x_{i}^{(k)})\in \mathbb{R}^{|\mathcal{Y}|}$$ is the $i$-th row of $\mathbf{Q}^{(k)}$, denoting the posterior probability vector of the $i$-th local instance $x_{i}^{(k)}$.

Crucially, the global model implicitly integrates cross-client discriminative knowledge across all classes $\mathcal{Y} = \{1, 2, \dots, C\}$, making it assign non-zero posterior probabilities even to classes absent from a client's local data.
Due to global imbalance and data heterogeneity, these posterior estimates for minority (tail) classes tend to be systematically biased downward. To address posterior underestimation and obtain a well-calibrated label re-allocator, we introduce two key normalization steps:
(1) z-score Standardization, and
(2) \texttt{tanh} Normalization.

\textbf{Class-wise z-score Standardization:}
Samples near decision boundaries often share features with other classes, thus exhibiting relatively high posterior probabilities for ambiguous ones.
As a result of biased global decision boundaries towards minority classes, most dominant-class samples exhibit vanishingly small posterior probabilities for minority classes.
Despite this, we empirically observe that a non-trivial subset of majority-class samples retains non-negligible probabilities for minority classes---insufficient to trigger misclassification but indicative of proximity to minority-class regions in the feature space.
To better calibrate these underestimated posteriors, particularly for minority classes, we apply class-wise $z$-score standardization, which rescales the posterior distributions within each class.
This highlights candidate samples with shared features for re-labeling.
Specifically, for the $i$-th instance in client $k$, let
\begin{equation}
    \mathcal{I}_{i}^{(k)}=\{i_0\in\{1,\ldots,n_k\}: y_{i_0}^{(k)}=y_{i}^{(k)}\}
\end{equation}
denote the index set of samples in $\mathcal{D}^{(k)}$ that share the same label as the $i$-th instance.
The class-wise mean and standard deviation vectors are computed as
\begin{align}
    \mu_{i}^{(k)} &= \frac{1}{|\mathcal{I}_{i}^{(k)}|}\sum_{i_0\in\mathcal{I}_{i}^{(k)}}Q_{i_0}^{(k)}, \\
    \sigma_{i}^{(k)} &= \sqrt{\frac{1}{|\mathcal{I}_{i}^{(k)}|-1}\sum_{i_0\in\mathcal{I}_{i}^{(k)}}\left(Q_{i_0}^{(k)}-\mu_{i}^{(k)}\right)^2},
\end{align}
where $\mu_i^{(k)} \in \mathbb{R}^{|\mathcal{Y}|}$ and $\sigma_i^{(k)} \in \mathbb{R}^{|\mathcal{Y}|}$ are the class-wise mean and standard deviation vectors of the posterior probabilities over this set.
The $z$-score vector for the $i$-th instance is then given by
\begin{equation}
    \mathrm{z}_i^{(k)} = \frac{Q_{i}^{(k)} - \mu_{i}^{(k)}}{\sigma_{i}^{(k)}}.
    \label{eq:muste}
\end{equation}
As illustrated in Figure~\ref{fig:frame}, the resulting $z$-score matrix $\mathbf{Z}^{(k)}$, with $i$-th row $\mathrm{z}_{i}^{(k)}$, recalibrates $\mathbf{Q}^{(k)}$, amplifying underestimated posterior probabilities of minority (tail) classes and highlighting samples near class boundaries.

\textbf{Normalization via \texttt{tanh}:} 
To ensure that the re-labeling rates in the label re-allocator lie within $[0, 1]$, we rescale the z-scores to the range $[-1, 1]$ using a \texttt{tanh} transformation.
This normalization incorporates two critical components: (1) a client-specific threshold $t_{\rm re}^{(k)}$, which is a tunable hyperparameter that determines the desired re-labeling strength by filtering out samples with weak feature similarity; and (2) a class-wise reweighting vector $\varpi_j^{(k)}\in \mathbb{R}^{|\mathcal{Y}|}$, computed from local class priors to re-label samples asymmetrically.
Specifically, $\varpi_{j}^{(k)}$ reweighs the re-labeling probability from class $j$ to any other class $c \in \mathcal{Y} \setminus \{j\}$ with their class prior difference between $j$ and $c$.

Let $n_{\mathcal{Y}}^{(k)}\in\mathbb{R}^{|\mathcal{Y}|}$ denote the vector of class-wise sample counts in the local dataset $\mathcal{D}^{(k)}$. 
We first apply min-max normalization on $n_{\mathcal{Y}}^{(k)}$ to construct the class-wise reweighting vector $\varpi^{(k)}= 1 - \texttt{minmax}(n^{(k)}_{\mathcal{Y}})$, defined as:
\begin{eqnarray}
    \varpi^{(k)}_{j} = \max(\varpi^{(k)} - \varpi^{(k)}[j],0)
    \label{eq:theta}
\end{eqnarray}
For samples belonging to a local tail class $y_{\text{tail}}$, we have $\varpi^{(k)}[y_{\text{tail}}] = 1$, which implies that $\varpi^{(k)}_{y_{\text{tail}}} = 0$. It zeros the re-labeling probabilities from the minority class to other classes, thereby preserving the integrity of minority-class samples.

Let $\mathbf{Z}^{(k)} \in \mathbb{R}^{N \times C}$ denote the matrix of local $z$-scores on client~$k$, where $\mathbf{Z}_{:,j}^{(k)}$ is its $j$-th column collecting class-$j$ scores over all $N$ local samples.
We use a client-specific, class-wise threshold vector $\mathbf{t}_{\mathrm{re}}^{(k)} \in \mathbb{R}^{C}$, controlled by the hyperparameter $\tau \in [0,1]$.
For each class $j$, the threshold is set to the $(1-\tau)$-quantile of the corresponding local $z$-scores:
\begin{equation}
    t_{\mathrm{re},j}^{(k)} = \mathrm{Quantile}_{1-\tau}\!\left(\mathbf{Z}_{:,j}^{(k)}\right), \quad j = 1, \ldots, C.
    \label{eq:tre_quantile}
\end{equation}
With this threshold, the re-labeling probability for the $i$-th instance on client~$k$ is defined as
\begin{equation}
    \rho^{(k)}\!\left(x_i^{(k)}\right) = \max\!\left(
        \tanh\!\left(z_i^{(k)} - t_{\mathrm{re}}^{(k)}\right) \odot \varpi^{(k)}_{y_i},\, 0
    \right),
    \label{eq:fliprate}
\end{equation}
where $\mathbf{z}_i^{(k)} \in \mathbb{R}^{C}$ is the local $z$-score vector of sample $x_i^{(k)}$, $\odot$ denotes element-wise multiplication, and $\boldsymbol{\varpi}^{(k)}_{y_i}$ is the class-prior weight associated with the observed label $y_i$.
Intuitively, $\mathbf{t}_{\mathrm{re}}^{(k)}$ acts as a tunable, class-wise filter: samples whose $z$-scores are insufficiently large relative to the class-specific threshold receive negative values after the $\tanh(\cdot)$ mapping and are truncated to zero, suppressing re-labeling for instances with weak minority-class similarity.
We provide a sensitivity analysis of $t_{\mathrm{re}}^{(k)}$ in Appendix~\ref{sec:sensitivity}.

Based on the local label re-allocator $\rho^{(k)}$, each client applies probabilistic re-labeling to its data to generate the re-labeled data before local training; this is the only difference FedReLa makes from the standard FL, which adjusts the decision boundaries and thus achieves significant improvement on the performance of minority/tail classes. 
The full procedure is summarized in Algorithm \ref{alg:2}.


\begin{algorithm}[ht]

    \textbf{Input:} local datasets $\mathcal{D}^{(k)}=\{(x_{i}^{(k)}, y_{i}^{(k)})\}_{i=1}^{n_k}$, classifier $f(\cdot;\cdot)$, global model $\theta^{\text{global}}_{t}$\;
    
    \textbf{Parameters:} Threshold $t_{\rm re}^{(k)}$, re-labeling round $T_{\text{relabel}}$\; \\
    \caption{\texttt{Re-Allocator} re-labels local data}\label{alg:2}
    
    \textbf{for} client $k \in K$ at communication round $t = T_{\text{relabel}}$ \textbf{do}: 
    
    \hspace{1.5em}Compute $\varpi^{(k)}$ by (\ref{eq:theta}) with $n^{(k)}_{\mathcal{Y}}$\;
    
    \hspace{1.5em}\textbf{for} $(x_{i}^{(k)},y_{i}^{(k)}) \in \widetilde{\mathcal{D}}^{(k)}$ \textbf{do}:
    
    \hspace{3em} $\mathrm{Q}_i^{(k)} \gets f(\theta^{\text{global}}_t;x_{i}^{(k)})$\;

    
    \hspace{1.5em}\textbf{for} {$(x_i^{(k)},y_i^{(k)}) \in \mathcal{D}^{(k)}$} \textbf{do}:
    
    \hspace{3em} Compute $z_i^{(k)}$ by (\ref{eq:muste}) with $\mathrm{Q}_i^{(k)}$
    
    \hspace{3em} $\varpi^{(k)}_{y_{i}^{(k)}} \gets \texttt{max}(\varpi^{(k)} - \varpi^{(k)}[y_i^{(k)}],0)$
    
    \hspace{3em} Compute $\rho^{(k)}(x_i^{(k)})$ by (\ref{eq:fliprate})
    
    \hspace{3em} $\mathcal{U} \in \mathbb{R}^{|\mathcal{Y}|} \gets \texttt{Bernoulli}(\rho^{(k)}(x_i^{(k)}))$\;
    
    \hspace{3em} \textbf{if} $\mathcal{U}$ contains 1 \textbf{then}:
    
    \hspace{4.5em} $\widetilde{y}_i^{(k)} \gets \mathcal{Y}[\texttt{argmax}(\rho^{(k)}(x_i^{(k)}))]$
    
    \hspace{1.5em} \textbf{return} $\widetilde{\mathcal{D}}^{(k)} \gets \{(x_i^{(k)},\widetilde{y}_i^{(k)})\}_{i=1}^{n_k}$
    
    \end{algorithm}

\paragraph{Robust to biased posteriors.}
To demonstrate how FedReLa calibrates underestimated posterior probabilities and identifies majority-class samples with feature similarities to minority classes, we first clarify the rationale of z-score standardization. The z-score quantifies the degree to which a posterior probability deviates from the distribution of its own class, rather than relying on the absolute value of the posterior. Even when the global model yields biased and severely underestimated posterior probabilities for minority classes, the relative ranking of these deviations across samples remains intact. The standardization step via z-score maps this preserved relative ranking to a standardized range, enabling the selection of samples with the highest feature similarity to minority classes. 

By further incorporating \texttt{tanh} normalization, FedReLa computes well-calibrated re-labeling probabilities from these standardized scores. Consequently, even when posteriors are underestimated by a biased global model, our method can still effectively identify majority-class samples that share salient features with minority classes.

We validate the proposed mechanism by examining re-labeled samples randomly selected from multiple clients’ local data on CIFAR-10-LT, inspecting their underestimated original posterior probabilities and calibrated re-labeling probabilities. As shown in Figure \ref{fig:sample}, these samples exhibit strong feature affinity with their target classes, stemming from both inherently similar categories (e.g., automobile–truck, deer–horse) and instance-level visual similarities even for classes with low overall correlation (e.g., airplane–ship, bird–frog–horse). Despite their generally small underestimated posteriors, these samples are still assigned high re-labeling probabilities, confirming that FedReLa effectively extracts valuable information from biased posterior distributions to guide proper sample re-labeling.

To validate the significance of z-score standardization in FedReLa, we conduct an ablation study comparing its performance using standardized re-labeling probabilities against directly employing underestimated posterior probabilities. Results are presented in Appendix \ref{sec:ablation} Table \ref{tab:zscore}, confirming that z-score standardization facilitates balanced label re-allocation and yields a more favorable trade-off between Head, Medium, and Tail class performance.

\begin{figure}[h]
\centering
\includegraphics[width=1.05\linewidth]{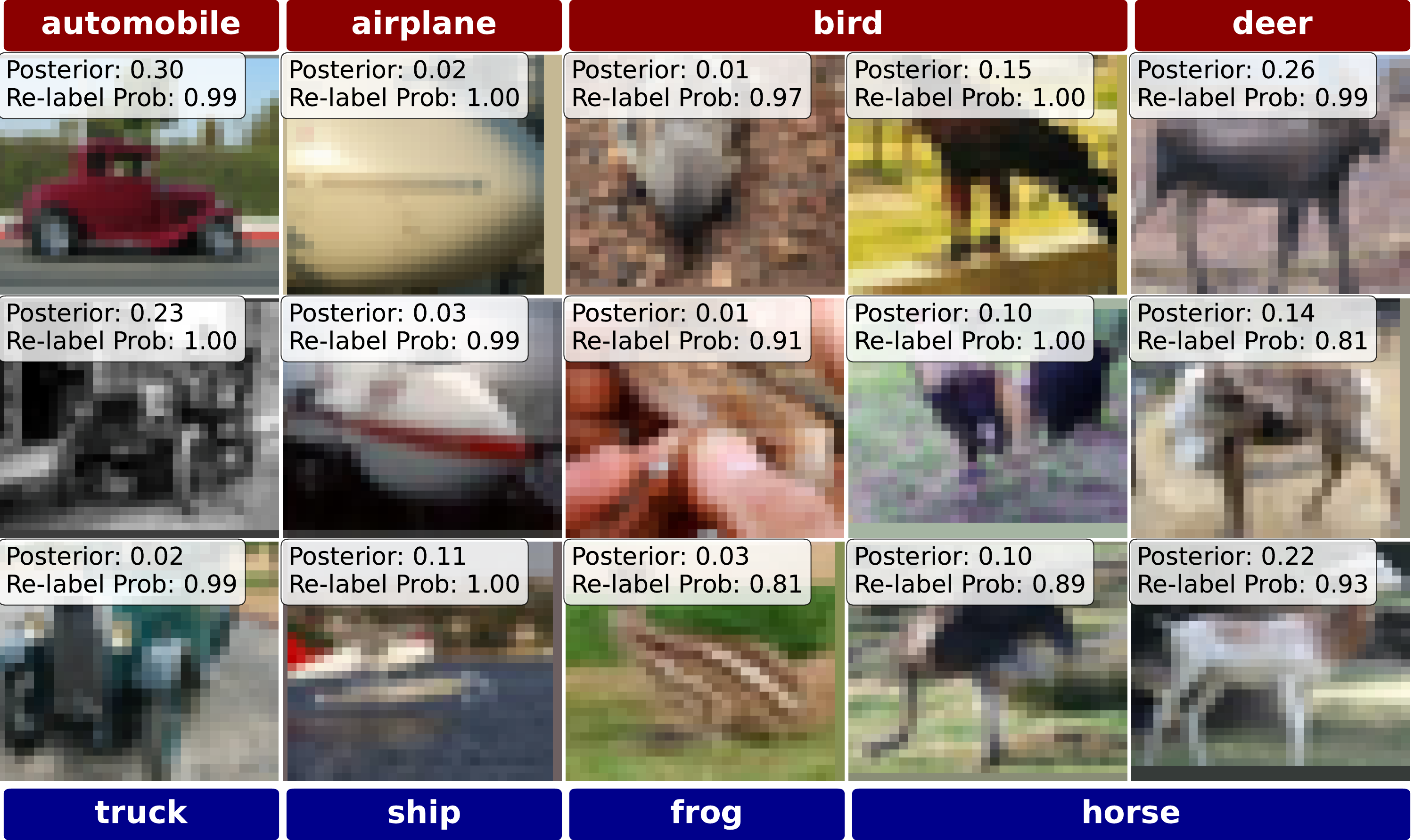}
\caption{FedReLa re-labeling examples on CIFAR-10-LT. Top red: original head classes; bottom blue: re-labeled tail classes. “Posterior”: underestimated tail-class posterior $P(y_{tail}|x)$; “Re-label Prob”: assigned re-labeling probabilities.}
\label{fig:sample}
\end{figure}

\begin{table*}[h]
\centering
\resizebox{\textwidth}{!}{
\begin{tabular}{l|c|c|ccc|ccc}
\hline
\multirow{2}{*}{Dataset} & \multirow{2}{*}{IR} & \multirow{2}{*}{Methods} & \multicolumn{3}{c|}{10\% Minority} & \multicolumn{3}{c}{30\% Minority} \\
\cline{4-9}
& & & Majority & Minority & Overall & Majority & Minority & Overall \\ \hline
\multirow{10}{*}{F-MNIST} & \multirow{5}{*}{10} & FedAvg & 88.43(87.40)\color{black}{-1.03} & 52.50(77.00)\textbf{+24.50} & 84.84(86.36)\textbf{+1.52} & 89.86(90.17)\textbf{+0.31} & 60.67(70.50)\textbf{+9.83} & 81.10(84.27)\textbf{+3.17} \\
& & FedProx & 88.23(87.43)\color{black}{-0.80} & 53.20(76.80)\textbf{+23.60} & 84.73(86.37)\textbf{+1.64} & 90.60(88.14)\color{black}{-2.46} & 60.07(70.40)\textbf{+10.33} & 81.44(82.82)\textbf{+1.38} \\
& & FedNova & 86.81(87.46)\textbf{+0.65} & 67.10(77.50)\textbf{+10.40} & 84.84(86.46)\textbf{+1.62} & 88.41(86.83)\color{black}{-1.58} & 69.77(76.17)\textbf{+6.40} & 82.82(83.63)\textbf{+0.81} \\
& & MOON & 88.41(87.83)\color{black}{-0.58} & 47.00(73.00)\textbf{+26.00} & 84.27(86.35)\textbf{+2.08} & 90.61(89.24)\color{black}{-1.37} & 59.80(69.87)\textbf{+10.07} & 81.37(83.43)\textbf{+2.06} \\
& & CLIMB & 89.05(89.98)\textbf{+0.93} & 65.52(76.24)\textbf{+10.72} & 86.70(88.61)\textbf{+1.91} & 93.00(92.30)\color{black}{-0.70} & 67.23(75.47)\textbf{+8.24} & 85.27(87.25)\textbf{+1.98} \\ \cline{2-9}
& \multirow{5}{*}{20} & FedAvg & 88.64(88.07)\color{black}{-0.57} & 49.00(73.10)\textbf{+24.10} & 84.68(86.57)\textbf{+1.89} & 90.44(87.33)\color{black}{-3.11} & 50.50(75.70)\textbf{+25.20} & 78.46(83.84)\textbf{+5.38} \\
& & FedProx & 89.37(87.77)\color{black}{-1.60} & 44.58(73.60)\textbf{+29.02} & 84.89(86.35)\textbf{+1.46} & 90.69(87.37)\color{black}{-3.32} & 50.00(74.90)\textbf{+24.90} & 78.48(83.63)\textbf{+5.15} \\
& & FedNova & 88.46(88.37)\color{black}{-0.09} & 52.34(71.30)\textbf{+18.96} & 84.85(86.66)\textbf{+1.81} & 85.94(87.27)\textbf{+1.33} & 55.03(77.90)\textbf{+22.87} & 76.67(84.46)\textbf{+7.79} \\
& & MOON & 89.07(88.07)\color{black}{-1.00} & 32.40(66.60)\textbf{+34.20} & 83.40(85.92)\textbf{+2.52} & 91.13(88.63)\color{black}{-2.50} & 44.43(74.77)\textbf{+30.34} & 77.12(84.47)\textbf{+7.35} \\
& & CLIMB & 90.30(90.32)\textbf{+0.02} & 51.28(71.34)\textbf{+20.06} & 86.40(88.43)\textbf{+2.03} & 94.34(90.28)\color{black}{-4.05} & 53.27(73.60)\textbf{+20.33} & 82.02(85.28)\textbf{+3.26} \\ \hline

\multirow{10}{*}{CIFAR-10} & \multirow{5}{*}{10} & FedAvg & 60.10(59.84)\color{black}{-0.26} & 27.80(55.70)\textbf{+27.90} & 56.87(59.43)\textbf{+2.56} & 65.93(62.14)\color{black}{-3.79} & 22.67(41.33)\textbf{+18.66} & 52.95(55.90)\textbf{+2.95} \\
& & FedProx & 60.66(61.18)\textbf{+0.52} & 30.02(58.70)\textbf{+28.68} & 57.60(60.93)\textbf{+3.33} & 67.66(60.79)\color{black}{-6.87} & 22.80(45.10)\textbf{+22.30} & 54.20(56.08)\textbf{+1.88} \\
& & FedNova & 58.54(58.60)\textbf{+0.06} & 29.90(57.00)\textbf{+27.10} & 55.68(58.44)\textbf{+2.76} & 65.23(62.70)\color{black}{-2.53} & 23.47(39.20)\textbf{+15.73} & 52.70(55.65)\textbf{+2.95} \\
& & MOON & 58.62(60.33)\textbf{+1.71} & 17.10(49.30)\textbf{+32.20} & 54.47(59.23)\textbf{+4.76} & 66.67(63.21)\color{black}{-3.46} & 23.63(38.67)\textbf{+15.04} & 53.76(55.85)\textbf{+2.09} \\
& & CLIMB & 81.62(82.68)\textbf{+1.06} & 37.45(46.18)\textbf{+8.73} & 77.20(79.03)\textbf{+1.83} & 86.82(87.47)\textbf{+0.65} & 33.59(43.26)\textbf{+9.67} & 70.85(74.21)\textbf{+3.36} \\ \cline{2-9}

& \multirow{5}{*}{20} & FedAvg & 60.33(60.70)\textbf{+0.37} & 17.25(51.60)\textbf{+34.35} & 56.02(59.79)\textbf{+3.77} & 67.39(61.51)\color{black}{-5.88} & 13.82(47.97)\textbf{+34.15} & 51.32(57.45)\textbf{+6.13} \\
& & FedProx & 59.33(60.58)\textbf{+1.25} & 15.60(53.90)\textbf{+38.30} & 54.96(59.91)\textbf{+4.95} & 67.69(61.96)\color{black}{-5.73} & 15.02(47.87)\textbf{+32.85} & 51.89(57.73)\textbf{+5.84} \\
& & FedNova & 61.77(62.77)\textbf{+1.00} & 26.05(57.80)\textbf{+31.75} & 58.20(62.27)\textbf{+4.07} & 66.97(60.12)\color{black}{-6.85} & 18.20(53.03)\textbf{+34.83} & 52.34(57.99)\textbf{+5.65} \\
& & MOON & 58.75(59.72)\textbf{+0.97} & 10.12(47.70)\textbf{+37.58} & 53.89(58.52)\textbf{+4.63} & 64.51(60.77)\color{black}{-3.74} & 7.65(40.03)\textbf{+32.38} & 47.45(54.55)\textbf{+7.10} \\
& & CLIMB & 79.53(80.34)\textbf{+0.81} & 28.38(40.21)\textbf{+11.83} & 74.42(76.33)\textbf{+1.91} & 87.75(85.81)\color{black}{-1.94} & 24.03(38.77)\textbf{+14.74} & 68.64(71.70)\textbf{+3.06} \\ \hline

\multirow{10}{*}{CIFAR-100} & \multirow{5}{*}{10} & FedAvg & 58.67(58.08)\color{black}{-0.59} & 12.30(23.10)\textbf{+10.80} & 54.03(54.58)\textbf{+0.55} & 58.67(57.07)\color{black}{-1.60} & 14.37(25.70)\textbf{+11.33} & 45.38(47.66)\textbf{+2.28} \\
& & FedProx & 58.14(58.03)\color{black}{-0.11} & 13.00(26.00)\textbf{+13.00} & 53.63(54.83)\textbf{+1.20} & 58.84(58.10)\color{black}{-0.74} & 14.93(22.03)\textbf{+7.10} & 45.67(47.28)\textbf{+1.61} \\
& & FedNova & 58.67(57.90)\color{black}{-0.77} & 13.40(23.90)\textbf{+10.50} & 54.14(54.50)\textbf{+0.36} & 59.49(58.00)\color{black}{-1.49} & 13.53(23.23)\textbf{+9.70} & 45.70(47.57)\textbf{+1.87} \\
& & MOON & 57.55(57.70)\textbf{+0.15} & 13.22(23.92)\textbf{+10.70} & 53.12(54.32)\textbf{+1.20} & 58.60(56.83)\color{black}{-1.77} & 16.37(23.93)\textbf{+7.56} & 45.93(46.96)\textbf{+1.03} \\
& & CLIMB & 47.96(48.28)\textbf{+0.32} & 10.50(24.90)\textbf{+14.40} & 44.21(45.94)\textbf{+1.73} & 49.16(47.44)\color{black}{-1.72} & 10.83(25.87)\textbf{+15.04} & 37.66(40.97)\textbf{+3.31} \\ \cline{2-9}

& \multirow{5}{*}{20} & FedAvg & 59.34(59.01)\color{black}{-0.33} & 6.80(15.80)\textbf{+9.00} & 54.09(54.69)\textbf{+0.60} & 58.73(57.50)\color{black}{-1.23} & 5.90(11.93)\textbf{+6.03} & 42.88(43.83)\textbf{+0.95} \\
& & FedProx & 58.86(58.18)\color{black}{-0.68} & 5.00(14.10)\textbf{+9.10} & 53.47(53.77)\textbf{+0.30} & 59.49(56.93)\color{black}{-2.56} & 6.03(13.23)\textbf{+7.20} & 43.45(43.82)\textbf{+0.37} \\
& & FedNova & 59.16(58.49)\color{black}{-0.67} & 7.30(17.80)\textbf{+10.50} & 53.97(54.42)\textbf{+0.45} & 59.60(58.11)\color{black}{-1.49} & 6.03(13.57)\textbf{+7.54} & 43.53(44.75)\textbf{+1.22} \\
& & MOON & 57.89(57.43)\color{black}{-0.46} & 6.65(18.02)\textbf{+11.37} & 52.77(53.49)\textbf{+0.72}& 59.36(56.14)\color{black}{-3.22} & 5.90(13.87)\textbf{+7.97} & 43.32(43.46)\textbf{+0.14} \\
& & CLIMB & 47.91(47.21)\color{black}{-0.70} & 5.02(16.25) \textbf{+11.23}& 43.62(44.12)\textbf{+0.50}& 49.22(46.42)\color{black}{-2.80} & 4.34(13.44)\textbf{+9.10} & 35.76(36.53)\textbf{+0.77}\\ \hline
\end{tabular}
}
\caption{Test accuracies (in \%) in the format of \texttt{original}\,(+\texttt{FedReLa}){\textbf{+\,\texttt{enhancement}}}\,/{\color{black}\,-\texttt{trade-off}}
of different methods on step-wise imbalance datasets at heterogeneity level of $\alpha = 0.3$.  }
\label{tab:step}
\end{table*}

\section{Experiments}\label{sec:experiments}


\begin{table*}[htp]
    \centering
    \resizebox{\textwidth}{!}{
    \begin{tabular}{c|c|c|cc|cc|cc}
    \hline
    \multirow{2}{*}{Dataset} & \multirow{2}{*}{IF} & Heterogeneity & \multicolumn{2}{c|}{$\alpha=0.1$} & \multicolumn{2}{c|}{$\alpha=0.3$} & \multicolumn{2}{c}{$\alpha=10$} \\ \cline{3-9}
     &  & Method/Metrics & H/M/T-shots & Overall & H/M/T-shots & Overall & H/M/T-shots & Overall \\ \hline
    \multirow{8}{*}{CIFAR-10} & \multirow{4}{*}{50} & FedETF & 85.07/41.03/5.40 & 47.96 & 79.60/67.93/43.10 & 65.15 & 88.77/73.22/56.81 & 74.52 \\
     &  & \multicolumn{1}{r|}{+(FedReLa)} & 78.70/65.20/35.57 & \textbf{61.71} & 74.95/71.57/55.63 & \textbf{68.14} & 86.80/76.62/71.59 & \textbf{79.18} \\
     \cline{3-3}
     &  & FedLOGE & 56.51/58.98/59.74 & 58.22 & 82.81/76.90/67.98 & 76.59 & 88.32/81.32/72.40 & 81.45 \\
     &  & \multicolumn{1}{r|}{+(FedReLa)} & 69.55/70.13/53.09 & \textbf{64.78} & 80.09/78.47/76.86 & \textbf{78.63} & 83.79/83.32/84.12 & \textbf{83.75} \\
     \cline{2-9}
     & \multirow{4}{*}{100} & FedETF & 66.90/36.80/24.40 & 40.87 & 72.05/37.13/25.32 & 42.88 & 92.16/69.52/50.13 & 68.56 \\
     &  & \multicolumn{1}{r|}{+(FedReLa)} & 57.60/44.53/36.95 & \textbf{45.42} & 72.59/40.11/30.47 & \textbf{46.00} & 90.17/70.73/66.91 & \textbf{75.04} \\
     \cline{3-3}
     &  & FedLOGE & 44.93/50.27/41.30 & 45.08 & 86.71/67.52/57.89 & 69.43 & 91.90/75.07/62.50 & 75.09 \\
     &  & \multicolumn{1}{r|}{+(FedReLa)} & 70.60/59.60/57.75 & \textbf{62.16} & 78.62/68.10/69.39 & \textbf{71.77} & 84.58/75.48/79.70 & \textbf{79.90} \\
    \hline
    \multirow{8}{*}{CIFAR-100} & \multirow{4}{*}{50} & FedETF & 62.97/44.65/20.12 & 42.35 & 68.89/46.87/20.28 & 44.83 & 71.11/48.78/18.80 & 44.83 \\
     &  & \multicolumn{1}{r|}{+(FedReLa)} & 56.81/48.60/29.20 & \textbf{44.71} & 61.79/51.20/30.38 & \textbf{47.41} & 56.37/53.09/34.30 & \textbf{47.18} \\
     \cline{3-3}
     &  & FedLOGE & 32.92/37.70/34.64 & 35.09 & 59.40/47.76/31.40 & 45.87 & 68.37/52.31/26.45 & 47.88 \\
     &  & \multicolumn{1}{r|}{+(FedReLa)} & 51.22/46.82/30.37 & \textbf{42.73} & 63.93/51.99/30.04 & \textbf{48.25} & 67.65/51.30/29.77 & \textbf{48.57} \\
     \cline{2-9}
     & \multirow{4}{*}{100} & FedETF & 64.48/42.64/13.72 & 37.11 & 68.75/48.13/14.93 & 39.63 & 71.76/47.48/16.45 & 39.67 \\
     &  & \multicolumn{1}{r|}{+(FedReLa)} & 56.10/46.73/24.14 & \textbf{40.19} & 61.01/51.24/25.02 & \textbf{42.70} & 56.05/50.78/29.86 & \textbf{42.51} \\
     \cline{3-3}
     &  & FedLOGE & 27.00/36.41/25.62 & 29.21 & 50.80/43.67/25.26 & 37.75 & 69.55/50.40/22.98 & 42.87 \\
     &  & \multicolumn{1}{r|}{+(FedReLa)} & 49.25/47.98/23.11 & \textbf{38.08} & 62.36/52.15/23.98 & \textbf{42.89} & 66.28/50.43/25.92 & \textbf{43.36} \\
    \hline
    \end{tabular}
    }
    \caption{Test accuracies (in \%) of different methods on long-tailed CIFAR-10/100.}
    \label{tab:lt}
\end{table*}

\textbf{Datasets:}
To provide a comprehensive evaluation, we conduct experiments under both \textbf{step-wise} and \textbf{long-tailed} global imbalance settings on Fashion-MNIST (F-MNIST) \cite{fashionmnist}, CIFAR-10/100 \cite{cifar10}, and ImageNet-LT (long-tailed) \citep{imagenet} datasets.
For step-wise imbalance, we undersample 10\% or 30\% of the classes with an imbalance ratio (IR) of 10 or 20.
For long-tailed imbalance, the datasets are sampled into a long-tailed class distribution using an imbalance factor (IF) of 50 or 100 as in \citep{LDAM}.
To simulate cross-client heterogeneity, we employ latent Dirichlet sampling \citep{dir} to partition the data in a non-IID fashion across clients. Specifically, we use $K=100$ clients for the step-wise versions of Fashion-MNIST and CIFAR-10, and $K=40$ for their long-tailed versions. For CIFAR-100, we use $K\in\{10,50,100\}$ clients in the long-tailed setting and 10 in the step-wise setting.
The heterogeneity level is controlled by the parameter $\alpha \in \{0.1, 0.3,10\}$. We set the client sample rate to 1.

\textbf{Baseline and prior SOTA:} 
We compare FedReLa with prior baselines and SOTA methods under both step-wise and long-tailed imbalance settings. For step-wise imbalance, we evaluate against FedAvg \citep{fedavg}, FedProx \citep{fedprox}, FedNova \citep{fednova}, MOON \citep{moon}, and CLIMB \citep{climb}. For long-tailed imbalance, we compare with FedETF \citep{Fed-ETF} and the latest SOTA method, FedLOGE \citep{FedLoGe}, and FedYoYo \citep{fedyoyo2025}.
As a data-level method, FedReLa can seamlessly integrate with the above methods, offering further improvements. We thus compare methods trained on original-labeled data with those trained on re-labeled data by FedReLa. All methods are trained with sufficient communication rounds to converge. Please refer to Appendix \ref{apdx:exp_details} for the communication rounds needed to achieve the convergence of each method.

\paragraph{Evaluation Metrics.}
All results are evaluated on class-balanced test sets.
Under step-wise imbalance, we report overall, majority-class, and minority-class test accuracy.
Under long-tailed imbalance, we report overall, head, med, and tail test accuracy, where head, med, and tail classes are defined by the cumulative training-sample proportions of 75\% and 95\% (detailed in Appendix \ref{apdx:eval}).
Results in Table \ref{tab:step} and \ref{tab:lt} are averaged over three runs with different random seeds.

\textbf{Performance comparison:}
For step-wise imbalance scenarios, Table \ref{tab:step} shows that FedReLa consistently enhances accuracy for both minority classes and overall performance across varying imbalance ratios (IR) and minority class proportions at heterogeneity level of $\alpha=0.3$ (see Appendix \ref{sec:ablation} for ablation analysis on $\alpha$). On Fashion-MNIST and CIFAR-10, FedReLa achieves 6.40\%--32.20\% minority-class accuracy improvement and 0.81\%--4.76\% overall accuracy gain under ${\rm IR}=10$. At ${\rm IR}=20$, the approach further elevates minority-class accuracy by 11.83\%--38.30\% and overall accuracy by 1.46\%--7.79\%. On CIFAR-100, FedReLa delivers a steady 6.03\%--15.04\% boost in minority-class accuracy while maintaining overall accuracy superiority. We note slightly degraded majority-class accuracy in the 30\%-minority-class setting, as improved minority-class performance inevitably impairs over-privileged majority-class performance. This aligns with the fundamental trade-off characteristic shared by all imbalanced learning methods. For the 10\%-minority-class scenario, FedReLa exhibits a negligible impact on majority-class accuracy and even improves it on CIFAR-10. This stems from label rectification by FedReLa, which relieves class overlap and thereby reduces outlier-induced interference for majority classes, particularly on clients with local-global IR mismatch. We further conducted additional experiments on FedETF and FedLoGe on step-wise setting, and FedReLa delivers 6.13\% and 13.72\% improvements on overall accuracy at heterogeneity level 0.1 for $IR\in[10,20]$ (See Table \ref{tab:lr_step}).



On long-tailed datasets (Table~\ref{tab:lt}), FedReLa yields consistent gains when built upon FedLOGE, especially under stronger data heterogeneity and more severe class imbalance.
In the most challenging setting (IF=100, $\alpha=0.1$), FedReLa improves overall accuracy by \textbf{+17.08\%} on CIFAR-10-LT and \textbf{+8.87\%} on CIFAR-100-LT over the FedLOGE baseline.
Overall, these results suggest that FedReLa rebalances performance toward tail classes without requiring a uniform sacrifice in head accuracy; in several regimes, both overall and tail accuracy improve simultaneously, which is desirable for long-tailed federated learning.
On CIFAR-100-LT, we further scale the number of clients to $K=50$ and $K=100$; results in Table~\ref{tab:cifar100_lt_clients} show that FedReLa consistently improves overall accuracy, achieves competitive state-of-the-art performance, and remains robust as the client population grows.


\begin{table}[H]
\centering
\begin{tabular}{lcc}
\toprule
\textbf{Method} & \textbf{Overall (\%)} & \textbf{H/M/T (\%)} \\
\midrule
FedYoYo & 38.15 & 41.19/39.42/31.08 \\
+FedReLa & \textbf{38.78} & 40.73/\textbf{40.06/33.71} \\
FedLoGe & 30.52	 & 46.29/28.01/15.02\\
+FedReLa & \textbf{31.70}	
 & 45.43/\textbf{30.44/18.02} \\
\bottomrule
\end{tabular}
\caption{Results on ImageNet-LT (H=Head, M=Medium, T=Tail). 
}
\label{tab:imagenet_lt}
\end{table}

\textbf{Large-Scale Dataset Validation on ImageNet-LT:}
We conduct experiments on ImageNet-LT with data heterogeneity level \(\alpha=0.1\), 20 clients, and 0.4 participation fraction. Another recent SOTA method, FedYoYo \citep{fedyoyo2025}, is used to demonstrate the algorithmic agnosticism of FedReLa. Although FedYoYo falls outside our core comparison scope, as it requires each client to upload the estimated local distribution for aggregation on the server (raising concerns about data privacy), we still include this comparison to demonstrate that FedReLa can consistently enhance performance on the large-scale dataset across various algorithmic methods.

The result in Table \ref{tab:imagenet_lt} confirms that FedReLa’s sample-level re-labeling mechanism avoids the scalability bottlenecks of feature-space methods (e.g., SMOTE) and maintains effectiveness on the large-scale dataset. The consistency of performance gains validates FedReLa’s inherent scalability for real-world large-scale federated learning scenarios.

\begin{table}[H]
    \centering
    \caption{Ablation on normalization strategies for relabeling probability calibration on CIFAR-100 (IF=100). All variants use a 5\% relabeling threshold for fairness.}
    \label{tab:norm_ablation}
    \resizebox{0.9\linewidth}{!}{
    \begin{tabular}{lcccc}
        \toprule
        Strategy & Overall & Head & Medium & Tail \\
        \midrule
        Baseline (FedETF) & 42.81 & 71.73 & 51.44 & 19.82 \\
        Min-max norm      & 43.33 & 70.41 & 52.03 & 21.46 \\
        \textbf{Z-score + tanh} & \textbf{44.54} & 67.53 & \textbf{54.01} & \textbf{24.52} \\
        \bottomrule
    \end{tabular}
    }
\end{table}

\paragraph{Ablation study} To validate superiority over alternatives, we further compare with min-max normalization.
As shown in Table~\ref{tab:norm_ablation}, z-score + tanh achieves the largest gains in both overall and tail-class accuracy (the core goal of imbalanced learning).
Although min-max normalization also improves performance, its gain on tail-class accuracy is limited.
Min-max normalization linearly scales posteriors and is max-value-sensitive, squeezing mildly deviant samples to small relabeling probabilities (e.g., z-scores of 10 and 15 are both mapped to near 1 by tanh, reflecting their high relabeling priority rather than being overshadowed by extreme values). More detailed ablation studies can be found in Appendix \ref{sec:ablation} for (i) the importance of Z-score standardization, (ii) different data-heterogeneity levels, and (iii) comparing one-shot with multi-shot relabeling.

\paragraph{Sensitivity analysis on $T_{\mathrm{relabel}}$.} 
We study how this hyperparameter affects performance on long-tailed CIFAR-10 and CIFAR-100 with IF=100 under two heterogeneity levels ($\alpha=1$ and $\alpha=10$); full results are reported in Table~\ref{tab:trelabel_sensitivity} (Appendix~\ref{sec:sensitivity}).
Across both datasets and heterogeneity settings, FedReLa consistently outperforms the FedETF baseline in overall accuracy and, more importantly, in tail-class accuracy, which is the primary objective under class imbalance.
Once $T_{\mathrm{relabel}}$ reaches 60\% of the total training rounds, further delaying re-labeling yields only marginal changes, indicating that performance is relatively insensitive to $T_{\mathrm{relabel}}$ beyond this point.
Among the choices examined, $T_{\mathrm{relabel}}=80\%R$ achieves the strongest overall and tail-class results in most scenarios.
We therefore adopt $T_{\mathrm{relabel}}\approx 80\%R$ as the default in our main experiments for robust performance across datasets and heterogeneity levels.

\section{Conclusion}

We propose FedReLa, a data-level approach for addressing class imbalance and data heterogeneity in FL. By asymmetrically re-labeling local data by a feature-dependent label re-allocator, FedReLa rectifies decision boundaries without relying on global class priors or additional communication. Empirical results across step-wise and long-tailed settings demonstrate consistent improvements in minority-class and overall accuracy over existing methods, especially under extreme heterogeneity. FedReLa is easy to integrate into algorithmic methods, offering a practical solution for real-world imbalanced federated learning.








\section*{Impact Statement}


This paper presents work whose goal is to advance the field of Machine
Learning. There are many potential societal consequences of our work, none
which we feel must be specifically highlighted here.

\section*{Acknowledgements}
FL is supported by the Australian Research Council (ARC) (Grant No.~DE240101089, LP240100101, DP230101540) and the NSF\&CSIRO Responsible AI program (Grant No.~2303037). MG was supported by ARC DP240102088.
GW was supported by the National Natural Science Foundation of China (Grant No. 12471255), and the Fundamental Research Funds for the Central Universities (Grant No. 63263103). LP is supported by ARC (Grant No.~LP240100101).



\bibliography{example_paper}
\bibliographystyle{icml2026}

\newpage
\appendix
\onecolumn
\section{Technical Details and Proofs}\label{appendix:A}

\begin{table}[h!]
\centering
\resizebox{0.85\textwidth}{!}{
\begin{tabular}{ll}
\toprule
\textbf{Symbol} & \textbf{Definition} \\
\midrule
\multicolumn{2}{l}{\textit{1. Datasets \& Sets}} \\
$\mathcal{D}$ & Global dataset (union of all local datasets) \\
$\mathcal{D}^{(k)}$ & Local dataset of client $k$ \\
$\widetilde{\mathcal{D}}^{(k)}$ & Re-labeled local dataset of client $k$ (by FedReLa) \\
$\mathcal{X}$ & Feature space ($x \in \mathcal{X} \subseteq \mathbb{R}^d$) \\
$\mathcal{Y}$ & Label space ($\mathcal{Y} = \{1, 2, ..., C\}$, $C$: number of classes) \\
$Y^{(k)}$ & Original label set of client $k$ \\
$\widetilde{Y}^{(k)}$ & Re-labeled label set of client $k$ \\
$\mathcal{I}_i^{(k)}$ & Index set of samples in $\mathcal{D}^{(k)}$ with the same label as $x_i^{(k)}$ \\
\midrule
\multicolumn{2}{l}{\textit{2. Model \& Parameters}} \\
$\theta$ & Model parameter vector \\
$\theta_t^{\text{global}}$ & Global model parameter at communication round $t$ \\
$\theta^{(k)}$ & Local model parameter of client $k$ \\
$f(\theta; x)$ & Global model (maps feature $x$ to posterior probabilities) \\
$T_{\text{relabel}}$ & Communication round for FedReLa's one-shot re-labeling \\
\midrule
\multicolumn{2}{l}{\textit{3. Probability \& Distribution}} \\
$\pi_j$ & Global prior probability of class $j$ ($\pi_j = \Pr(Y=j)$) \\
$\pi_j^{(k)}$ & Local prior probability of class $j$ on client $k$ \\
$\pi_j^{[w]}$ & Weighted aggregated prior of class $j$ (server-side) \\
$\eta_j(x)$ & Global posterior probability of class $j$ given $x$ \\
$\eta_j^{(k)}(x)$ & Local posterior probability of class $j$ given $x$ on client $k$ \\
$\widetilde{\eta}_j^{(k)}(x)$ & Posterior probability of class $j$ on $\widetilde{\mathcal{D}}^{(k)}$ \\
$\widetilde{\eta}_j^{[w]}(x)$ & Aggregated posterior probability of class $j$ (server-side) \\
$P_j(x)$ & Class-conditional distribution of $X | Y{=}j$ \\
$P_j^{(k)}(x)$ & Local class-conditional distribution of $X | Y{=}j$ on client $k$ \\
\midrule
\multicolumn{2}{l}{\textit{4. FedReLa Core Parameters}} \\
$\rho_{\ell \to j}^{(k)}(x)$ & Re-labeling probability from local majority class $\ell$ to local minority class $j$ on client $k$ \\
$\rho_{j \to \ell}^{(k)}(x)$ & Re-labeling probability from class $j$ to $\ell$ on client $k$ (set to 0) \\
$\mathbf{Q}^{(k)}$ & Posterior probability matrix of $\mathcal{D}^{(k)}$ ($|\mathcal{D}^{(k)}| \times C$) \\
$z_i^{(k)}$ & Class-wise z-score vector of sample $x_i^{(k)}$ on client $k$ \\
$\mu_i^{(k)}$ & Class-wise mean of posterior probabilities (for z-score) \\
$\sigma_i^{(k)}$ & Class-wise std of posterior probabilities (for z-score) \\
$t_{\text{re}}^{(k)}$ & Client-specific re-labeling threshold (tunable via $\tau$) \\
$\varpi_j^{(k)}$ & Class-wise reweighting vector (from local class priors $\pi^{(k)}$) \\
$n_{\mathcal{Y}}^{(k)}$ & Class-wise sample count vector of $\mathcal{D}^{(k)}$ \\
$\tau$ & Hyperparameter controlling re-labeling strength (top-$\tau\%$ z-scores) \\
\midrule
\multicolumn{2}{l}{\textit{5. Imbalance \& Heterogeneity}} \\
$\mathrm{IR}(\mathcal{D})$ & Global imbalance ratio ($\max_j \pi_j / \min_j \pi_j$) \\
IF & Imbalance factor (for long-tailed datasets) \\
$\alpha$ & Heterogeneity control parameter (Latent Dirichlet Sampling) \\
$K$ & Number of clients in the federation \\
$w_k$ & Aggregation weight of client $k$ (FedAvg: $w_k = |\mathcal{D}^{(k)}| / |\mathcal{D}|$) \\
\midrule
\multicolumn{2}{l}{\textit{6. Decision Boundaries}} \\
$S_{j,\ell}$ & Optimal Bayesian decision boundary between classes $j$ and $\ell$ \\
$\widetilde{S}^{(k)}$ & Decision boundary of client $k$ on re-labeled local dataset $\widetilde{\mathcal{D}}^{(k)}$ \\
\bottomrule
\end{tabular}
}
\caption{Notation Table: Key Symbols and Definitions.}
\label{tab:notation}
\end{table}

\begin{example}\label{example:01}
    {\color{black} We use an extreme example to illustrate that the mismatches between global and local imbalance ratios can amplify the bias in the aggregated decision boundary.}
    Consider a binary classification problem with two classes, $j$ and $\ell$, and two clients, $k_1$ and $k_2$.
    Assume that the global class priors satisfy $\pi_{\ell} \gg \pi_j$, where $\pi_j = m_j / n$, $\pi_{\ell} = m_{\ell}/n$ and $n = |\mathcal{D}^{(k_1)}| + |\mathcal{D}^{(k_2)}|$. 
    Here, $m_j$ and $m_{\ell}$, satisfying $m_{j}+m_{\ell}=n$, are the number of data points in class $j$ and $\ell$, respectively.
    Suppose the local dataset $\mathcal{D}^{(k_1)}$ contains $(m_j - 1)$ samples from the global minority class $j$ and one sample from the global majority class $\ell$, while the local dataset $\mathcal{D}^{(k_2)}$ contains $(m_{\ell} - 1)$ samples from class $\ell$ and one sample from class $j$. The local decision boundaries on $\mathcal{D}^{(k_1)}$ and $\mathcal{D}^{(k_2)}$ are:
    \begin{eqnarray*}
        \left\{x\in \mathcal{X}: \frac{P_j(x)}{P_{\ell}(x)}=\frac{1}{m_j - 1} \right\} \text{~~and~~}
        \left\{x\in \mathcal{X}: \frac{P_j(x)}{P_{\ell}(x)}= m_{\ell} - 1\right\}.
    \end{eqnarray*}
    In FL, consider the global aggregated model $\eta_{j}^{[w]}(x)=w_{k_1}\eta_{j}^{(k_1)}(x)+w_{k_2}\eta_{j}^{(k_2)}(x)$.
    
    Suppose the aggregation weights are chosen as $w_{k_1} \propto |\mathcal{D}^{(k_1)}|$ and $w_{k_2} \propto |\mathcal{D}^{(k_2)}|$, which is widely used in imbalanced classification in the literature. Since $|\mathcal{D}^{(k_1)}| = m_j$ and $|\mathcal{D}^{(k_2)}| = m_{\ell}$, it follows that $w_{k_1} = \pi_j$ and $w_{k_2} = \pi_{\ell}$, implying $w_{k_1} \ll w_{k_2}$. In addition, the local imbalance ratios are $\mathrm{IR}(\mathcal{D}^{(k_1)}) = 1/(m_j-1)$ and $\mathrm{IR}(\mathcal{D}^{(k_2)}) = m_{\ell} - 1$, so that $\mathrm{IR}(\mathcal{D}^{(k_1)})\ll \mathrm{IR}(\mathcal{D}^{(k_2)})$. 
    Thus, the decision boundary of the global aggregated model $\eta_{j}^{[w]}(x)$ is $S_{j,\ell}^{[w]} =  \{x\in \mathcal{X}: P_j(x)/P_{\ell}(x)=\pi_{\ell}^{[w]}/\pi_j^{[w]}\} $ with $\pi_{\ell}^{[w]}=\pi_j(m_j-1)/m_j+\pi_{\ell}/m_{\ell}$ and $\pi_j^{[w]}=\pi_j/m_j+\pi_{\ell}(m_{\ell}-1)/m_{\ell}$.
    As a result, during model aggregation in each communication round, the global imbalance is exacerbated due to the dominant contribution from client $k_2$, amplified by both its large aggregation weight $w_{k_2}$ and local imbalance ratio $\mathrm{IR}(\mathcal{D}^{(k_2)})$. 

    Even under the uniform averaging with $w_{k_1}=w_{k_2}=1/2$, the decision boundary of the global aggregated model $\eta_{j}^{[w]}(x)$ is $S_{j,\ell}^{[w]} =  \{x\in \mathcal{X}: P_j(x)/P_{\ell}(x)=\pi_{\ell}^{[w]}/\pi_j^{[w]}\} $ with $\pi_{\ell}^{[w]}=(m_j-1)/(2m_j)+1/(2m_{\ell})$ and $\pi_j^{[w]}=1/(2m_j)+(m_{\ell}-1)/(2m_{\ell})$. The decision boundary is still biased due to the global imbalance.
\end{example}

\medskip

\begin{remark} \label{app:remark_data_hetero}
    Due to data heterogeneity, local class distribution can deviate significantly from the global one.
    It is possible for a class that is globally a minority to become a majority within certain clients.
    As local clients lack access to the global class prior ratios, the mismatch can lead to re-labeling in unexpected directions. For instance, when re-labeling class $j$ samples to class $\ell$ even if $\pi_{\ell} \gg \pi_j$ globally.
    To handle this, we let the re-labeling direction be determined by local priors: on client $k$, if $\pi_{\ell}^{(k)} > \pi_{j}^{(k)}$, then class $\ell$ samples are re-labeled to class $j$, and vice versa. This results in the following Bayesian decision boundary on client $k$:
    \begin{align}
        \notag \widetilde{S}^{(k)} =  \Bigg\{x^{*} \in \mathcal{X}: \frac{P_j(x^*)}{P_{\ell}(x^*)} = \frac{1-2\rho_{\ell\to j}^{(k)}(x^*)\cdot\mathbb{I}(\pi_{\ell}^{(k)}>\pi_{j}^{(k)})}{1-2\rho_{j\to\ell}^{(k)}(x^*)\cdot\mathbb{I}(\pi_{\ell}^{(k)}<\pi_{j}^{(k)})}\cdot \frac{\pi_{\ell}^{(k)}}{\pi_{j}^{(k)}}\Bigg\}.
    \end{align}
    Then, the Bayesian decision boundary of the global aggregated model $\widetilde{\eta}_{j}^{[w]}(x)$ takes the form:
    \begin{align}
        \notag \widetilde{S}^{[w]} = \Bigg\{x^{*} \in \mathcal{X}: \frac{P_j(x^*)}{P_{\ell}(x^*)} = \frac{\sum_{k=1}^{K}w_k\pi_{\ell}^{(k)}[1-2\rho_{\ell\to j}^{(k)}(x)]\mathbb{I}(\pi_{\ell}^{(k)}>\pi_{j}^{(k)})/\pi_{\ell}}{\sum_{k=1}^{K}w_k\pi_{j}^{(k)}[1-2\rho_{j\to\ell}^{(k)}(x)]\mathbb{I}(\pi_{\ell}^{(k)}<\pi_{j}^{(k)})/\pi_j}\cdot\frac{\pi_{\ell}}{\pi_j}\Bigg\}.
    \end{align}
    Under global imbalance where $\pi_{\ell} \gg \pi_j$, we typically observe that $\sum^K_{k = 1}\mathbb{I}(\pi_{\ell}^{(k)}>\pi_{j}^{(k)}) > \sum^K_{k = 1}\mathbb{I}(\pi_{\ell}^{(k)}<\pi_{j}^{(k)})$, 
    meaning more clients locally reflect the global imbalance than contradict it.
    Furthermore, even if $\pi_{\ell}^{(k_0)} < \pi_{j}^{(k_0)}$ for some client $k_0$, its weight $w_{k_0} \propto |\mathcal{D}^{(k_0)}|$ is often small as $|\mathcal{D}^{(k_0)}|$ is less than double of the total number of class-$j$ samples in the full dataset $\mathcal{D}$.
    As a result, we still expect a correction in the decision boundary of the global aggregated model with
    $\frac{\sum_{k=1}^{K}w_k\pi_{\ell}^{(k)}[1-2\rho_{\ell\to j}^{(k)}(x)]\mathbb{I}(\pi_{\ell}^{(k)}>\pi_{j}^{(k)})/\pi_{\ell}}{\sum_{k=1}^{K}w_k\pi_{j}^{(k)}[1-2\rho_{j\to\ell}^{(k)}(x)]\mathbb{I}(\pi_{\ell}^{(k)}<\pi_{j}^{(k)})/\pi_j} < 1$.
\end{remark}

\paragraph{Explanation on Aggregated Model Representation}
To analyze how re-labeling influences the global decision boundary, we adopt the global model defined via posterior aggregation (i.e., $\sum_{k=1}^{K} w_k f(x, \theta^{(k)})$, where $f(x, \theta^{(k)})$ denotes the local posterior of client $k$ with parameter $\theta^{(k)}$) instead of parameter aggregation (i.e., $f\left(x, \sum_{k=1}^{K} w_k \theta^{(k)}\right)$). This choice is motivated by two key considerations: (1) the posterior-aggregated form renders changes in the decision boundary more explicit and easier to quantify, which aligns with our focus on analyzing re-labeling's effect; (2) it is consistent with statistical model averaging ideas, providing a flexible framework for heterogeneous FL scenarios.

No specific constraints are imposed on the aggregation weights $w_k$, and our only assumption is that the aggregated global model can be expressed as a weighted average of local posteriors, which we explicitly formalize. Furthermore, the two aggregation paradigms (parameter-aggregated and posterior-aggregated) are approximately equivalent under mild regularity conditions, as justified by first-order Taylor expansion:

Assume all local parameters $\theta^{(k)}$ are sufficiently close to a common reference value $\theta_0$ (a reasonable condition in late-stage FL training when models converge). Expanding both models around $\theta_0$:
\begin{enumerate}
        \item For the parameter-aggregated global model:
        \begin{align*}
        f\left(x, \sum_{k=1}^{K} w_k \theta^{(k)}\right) &\approx f(x,\theta_0) + \left.\frac{\partial f(x,\theta)}{\partial\theta}\right|_{\theta=\theta_0} \sum_{k=1}^{K} w_k (\theta^{(k)} - \theta_0) \\
        &= f(x,\theta_0) + \sum_{k=1}^{K} w_k \left.\frac{\partial f(x,\theta)}{\partial\theta}\right|_{\theta=\theta_0} (\theta^{(k)} - \theta_0)
        \end{align*}
        \item For the posterior-aggregated global model (noting $\sum_{k=1}^{K} w_k = 1$):
        \begin{align*}
        \sum_{k=1}^{K} w_k f(x, \theta^{(k)}) &\approx f(x,\theta_0) + \left.\frac{\partial f(x,\theta)}{\partial\theta}\right|_{\theta=\theta_0} \left(\sum_{k=1}^{K} w_k \theta^{(k)} - \theta_0\right) \\
        &= f(x,\theta_0) + \sum_{k=1}^{K} w_k \left.\frac{\partial f(x,\theta)}{\partial\theta}\right|_{\theta=\theta_0} (\theta^{(k)} - \theta_0)
        \end{align*}
\end{enumerate}

The two expansions are identical, confirming that the parameter-aggregated and posterior-aggregated global models are \textbf{first-order equivalent} when local parameters are sufficiently close. This justifies our use of the posterior-aggregated form for analyzing decision boundary changes, as it does not introduce substantive deviations from standard parameter-aggregated FL while offering greater analytical tractability.

\subsection{Proof of Lemma \ref{lem:bound_noise_k}}

\begin{proof}
    As we are considering the binary classification setting, the optimal Bayesian decision boundary based on $\widetilde{\mathcal{D}}^{(k)}$ is 
    \begin{equation}
        \notag \widetilde{S}^{(k)} = \left\{x^*\in\mathcal{X}: \widetilde{\eta}_{j}^{(k)}(x^*) = \widetilde{\eta}_{\ell}^{(k)}(x^*)\right\},
    \end{equation}
    where 
    \begin{equation}
        \notag \widetilde{\eta}_{\ell}^{(k)}(x) = \eta_{\ell}^{(k)}(x)[1 - \rho_{\ell\to j}^{(k)}(x)] + \eta_{j}^{(k)}(x)\rho_{j\to\ell}^{(k)}(x).
    \end{equation}
    Given the formulation of $\widetilde{\eta}_{j}^{(k)}(x)$, we need
    \begin{equation}
        \notag \eta_{j}^{(k)}(x^*)[1 - \rho_{j\to\ell}^{(k)}(x^*)] + \eta_{\ell}^{(k)}(x^*)\rho_{\ell\to j}^{(k)}(x^*) = \eta_{\ell}^{(k)}(x^*)[1 - \rho_{\ell\to j}^{(k)}(x^*)] + \eta_{j}^{(k)}(x^*)\rho_{j\to\ell}^{(k)}(x^*),
    \end{equation}
    which is equivalent to
    \begin{equation}
        \notag \eta_{j}^{(k)}(x^*)[1 - 2\rho_{j\to\ell}^{(k)}(x^*)] = \eta_{\ell}^{(k)}(x^*)[1 - 2\rho_{\ell\to j}^{(k)}(x^*)].
    \end{equation}
    Regarding the fact that $\eta_{j}^{(k)}(x) = \pi_{j}^{(k)}P_j(x)/\sum_{j_0\in\mathcal{Y}}\pi_{j_0}^{(k)}P_{j_0}(x)$, the above equation can be simplified to
    \begin{equation}
        \notag \pi_{j}^{(k)}P_j(x^*)[1 - 2\rho_{j\to\ell}^{(k)}(x^*)] = \pi_{\ell}^{(k)}P_{\ell}(x^*)[1 - 2\rho_{\ell\to j}^{(k)}(x^*)],
    \end{equation}
    and the final result follows immediately.
\end{proof}

\subsection{Proof of Lemma \ref{lem:global_noisy}}

\begin{proof}
    The global aggregated model satisfies
    \begin{equation}
        \notag \widetilde{\eta}_{j}^{[w]}(x) = \sum_{k=1}^{K}w_k\widetilde{\eta}_{j}^{(k)}(x) = \sum_{k=1}^{K}w_k \left\{\eta_{j}^{(k)}(x) + \eta_{\ell}^{(k)}(x)\rho_{\ell\to j}^{(k)}(x)\right\}
    \end{equation}
    and
    \begin{equation}
        \notag \widetilde{\eta}_{\ell}^{[w]}(x) = \sum_{k=1}^{K}w_k\widetilde{\eta}_{\ell}^{(k)}(x) = \sum_{k=1}^{K}w_k \eta_{\ell}^{(k)}(x)[1 - \rho_{\ell\to j}^{(k)}(x)].
    \end{equation}
    Then, for $x^*$ on the Bayesian decision boundary, it requires that
    \begin{align}
        \notag \widetilde{\eta}_{j}^{[w]}(x^*) = &  \sum_{k=1}^{K}w_k \left\{\eta_{j}^{(k)}(x^*) + \eta_{\ell}^{(k)}(x^*)\rho_{\ell\to j}^{(k)}(x^*)\right\} \\
        \notag = & \sum_{k=1}^{K}w_k \eta_{\ell}^{(k)}(x^*)[1 - \rho_{\ell\to j}^{(k)}(x^*)] = \widetilde{\eta}_{\ell}^{[w]}(x^*),
    \end{align}
    which can be simplified to
    \begin{equation}
        \notag \sum_{k=1}^{K}w_k\eta_{j}^{(k)}(x^*) = \sum_{k=1}^{K}w_k \eta_{\ell}^{(k)}(x^*)[1 - 2\rho_{\ell\to j}^{(k)}(x^*)].
    \end{equation}
    Applying $\eta_{j}^{(k)}(x) = \pi_{j}^{(k)}P_j(x)/\sum_{j_0\in\mathcal{Y}}\pi_{j_0}^{(k)}P_{j_0}(x)$ again, we get the desired result.
\end{proof}

\section{Additional Experiment Details}\label{apdx:exp_details}
 The code is available at: \url{https://github.com/guangzhengh/FedReLa.git}.
 
\paragraph{Training details.}
To ensure fair comparison, all global models are trained until full convergence with communication rounds adapted per method. Specifically, baseline methods require 500 rounds for convergence, while CLIMB, which introduces class-wise loss reweighting parameters, demands extended training: 2000 rounds on Fashion-MNIST and CIFAR-10, and 1000 rounds on CIFAR-100. As we do not intend to compare these algorithm-level methods, we use the SGD optimizer with the same weight decay and momentum as they reported in their original implementations: weight decay of $0.00001$ and momentum 0.9 for all methods except long-tailed-oriented methods FedETF and FedLOGE, which follow their original implementations with zero weight decay and momentum 0.5. All experiments were conducted with three distinct random seeds, and their average results are reported in the tables. 

\paragraph{Evaluation metrics.} \label{apdx:eval}
A balanced test dataset is used to evaluate the overall accuracy performance of the global model. Additionally, the average test accuracy for both minority and majority classes is reported for the step-wise imbalanced setting. For long-tailed datasets, we report the accuracy over head, medium, and tail classes as Many-, Medium-, and Few-shot, respectively. 

Adhering to the long-tailed federated learning protocol established in \citep{FedLoGe}, we categorize classes into three disjoint subsets based on sample size distribution: head (majority), medium, and tail (minority) class groups, constituting 75\%, 20\%, and 5\% of total samples, respectively. To evaluate model performance through stratified accuracy metrics, we report Head/Medium/Tail-shot accuracies corresponding to these partitions in Table \ref{tab:lt}. 

\subsection{Computational cost} \label{apdx:cost}
All experiments were conducted on a Spartan cluster on a single node equipped with one NVIDIA H100 GPUs (80GB memory)
10GB RAM with 12 CPU cores.

Before approximating the computational cost of FedReLa, we would like to clarify the fundamental difference between \textbf{extra Local training} and \textbf{extra local computation}:

\begin{enumerate}
    \item Local training overhead involves \textbf{gradient updates for new parameters or module}. For example, methods that introduce new optimizable parameters (e.g., CLIMB, FedLOGE, etc.) require extra per-round local training overhead to update the gradients of these parameters.
    \item ONE-TIME Model Inference: FedReLa only performs \textbf{one-time model inference during a single round} to obtain posterior probabilities, without updating the model or gradients. Therefore, we describe FedReLa as operating "without extra local training."
\end{enumerate}

\paragraph{Approximate one-time computation cost of FedReLa.}
The strength of FedReLa as a data-level method lies in its requirement for only a single computational step during a single round to refine the imbalanced data distribution, thereby achieving long-lasting improvements in model performance.
The core operation of FedReLa is model inference (forward pass) to obtain the local posterior probability matrix $Q^{(k)}$. We approximately consider
\begin{eqnarray*}
    &\mathrm{FLOP}_{\text{train}} = \mathrm{FLOP}_{\text{forward}} +\mathrm{FLOP}_{\text{backpropagation}},\\
    &\mathrm{FLOP}_{\text{backpropagation}} \approx 2 \times \mathrm{FLOP}_{\text{forward}}.
\end{eqnarray*}
Thus, the FLOPs required for $Q^{(k)} = f(\theta^{\text{global}}_{T_{\text{relabel}}}; X^{(k)})$ can be approximately quantified with:
\begin{eqnarray*}
    \mathrm{FLOP}_{Q^{(k)}} = \mathrm{FLOP}_{\text{forward}} \approx \frac{1}{3} \mathrm{FLOP}_{\text{train}}.
\end{eqnarray*}
The total computation cost of FedReLa is approximately 1/3 of the computation cost of a single training round. This cost is One-Time only during the single round of $T_{\text{relabel}}$. More importantly, the local posterior probabilities can be naturally collected during training epochs. \textbf{This allows the computation cost of this one-time model inference to be merged with the normal training overhead.}

\paragraph{Runtime comparison.} Unlike methods requiring from-scratch training, FedReLa enhances classifier performance solely through one-shot re-labeling during the fine-tuning phase. Consequently, its computational overhead is primarily determined by the base federated learning algorithm it augments. For instance, each communication round of FedLOGE requires an average of 72.36 seconds on CIFAR100. When FedReLa enhances FedLOGE with a one-shot computation for label re-allocator, the average communication round time increased to 73.06 seconds, which is negligible.

\subsection{Additional experiment results}
\paragraph{Additional experiment on step-wise setting with recent SOTAs.} Although recent methods, such as FedETF \citep{Fed-ETF} and FedLOGE \citep{FedLoGe}, are long-tail-oriented approaches, we conducted additional experiments on CIFAR-10 with step-wise imbalance. 
The results in Table \ref{tab:lr_step} demonstrate that FedReLa still achieves SOTA performance on step-wise imbalance. FedReLa brings significant improvements, especially under higher imbalance ratios and more heterogeneous data.
\begin{table}[H]
\centering
\setlength{\tabcolsep}{3pt} 
\begin{tabular}{l|c|cc|cc}
\hline
\multicolumn{2}{c}{\textbf{}} & \multicolumn{2}{c}{$\alpha$ = 0.3} & \multicolumn{2}{c}{$\alpha$ = 0.1} \\
\hline
IR & Method & Minority/Majority & Overall & Minority/Majority & Overall \\
\hline
\multirow{2}{*}{10} & FedETF & 74.21/93.79 & 84.01 & 44.14/96.46 & 70.30 \\
& \textbf{+FedReLa} & 82.11/91.72 & 86.92 & \textbf{68.28/84.52} & \textbf{76.43} \\
& FedLOGE & 80.49/92.12 & 86.32 & 61.57/86.05 & 73.81 \\
& \textbf{+FedReLa} & \textbf{85.81/89.73} & \textbf{87.77} & 68.7/81.64 & 75.17 \\
\hline
\multirow{2}{*}{20} & FedETF & 69.31/87.73 & 78.52 & 43.65/72.55 & 58.10 \\
& \textbf{+FedReLa} & \textbf{82.75/83.74} & \textbf{83.11} & \textbf{67.85/75.75} & \textbf{71.82} \\
& FedLOGE & 79.30/85.9 & 82.60 & 45.61/63.79 & 54.70 \\
& +FedReLa & 79.60/84.21 & 81.90 & 59.18/70.73 & 64.95 \\
\hline
\end{tabular}
\caption{Performance on Step-wise-imbalanced CIFAR-10.}
\label{tab:lr_step}
\end{table}

\paragraph{Additional experiment on higher proportion of minority classes.} In addition to 10\% and 30\% minority classes for step-wise-imbalanced datasets, we further extend the proportion to 50\% to examine the consistency of enhancement from FedReLa on extreme conditions. As demonstrated in Table \ref{tab:ir10_50percent}, FedReLa delivers significant performance gains even in the extreme case where minority classes constitute 50\% of the data. Without the FedReLa boost, baseline methods exhibit pronounced accuracy degradation as the proportion of the minority class increases. Our proposed label re-allocator effectively mitigates this performance deterioration while simultaneously enhancing overall accuracy. These results strongly validate FedReLa's capability to provide robust performance enhancements for federated learning methods that face substantial minority class presence.

\begin{table}[H]
\centering
\resizebox{1\textwidth}{!}{%
\begin{tabular}{l|cc|cc|cc}
\hline
\multicolumn{7}{c}{IR=10 with 50\% Minority Classes} \\ \hline
 & \multicolumn{2}{c|}{Fashion-MNIST} & \multicolumn{2}{c|}{CIFAR-10} & \multicolumn{2}{c}{CIFAR-100} \\ \hline
 Method & Minority & Overall & Minority & Overall &Minority & Overall\\ \hline
FedAvg & 63.44(77.10)\textbf{+13.66} & 78.29(82.90)\textbf{+4.61} & 16.94(44.42)\textbf{+27.48} & 48.67(56.72)\textbf{+8.05} & 12.17(25.27)\textbf{+13.10} & 42.48(46.76)\textbf{+4.28} \\ \hline
FedProx & 63.78(77.24)\textbf{+13.46} & 78.62(82.62)\textbf{+4.00} & 16.32(43.54)\textbf{+27.22} & 48.20(56.52)\textbf{+8.32} & 12.83(23.13)\textbf{+10.30} & 41.97(46.28)\textbf{+4.31} \\ \hline
FedNova & 77.60(80.50)\textbf{+2.90} & 81.89(83.30)\textbf{+1.41} & 23.47(39.20)\textbf{+15.73} & 52.70(55.65)\textbf{+2.95} & 13.53(23.23)\textbf{+9.70} & 45.70(47.57)\textbf{+1.87} \\ \hline
MOON & 64.68(76.60)\textbf{+11.92} & 79.22(82.74)\textbf{+3.52} & 11.20(38.54)\textbf{+27.34} & 46.35(54.69)\textbf{+8.34} & 11.17(24.23)\textbf{+13.06} & 42.53(46.56)\textbf{+4.03} \\ \hline
CLIMB & 75.47(79.90)\textbf{+4.43} & 85.22(87.14)\textbf{+1.92} & 31.35(42.85)\textbf{+11.50} & 69.68(72.01)\textbf{+2.33} & 11.00(24.48)\textbf{+13.48} & 30.71(34.79)\textbf{+4.08} \\ \hline
\end{tabular}%
}
\caption{Test accuracies (in \%) of different methods on step-wise imbalance datasets under IR=10 with 50\% minority classes at heterogeneity level $\alpha = 0.3$ in the format of \texttt{original}\,(+\texttt{FedReLa}){\textbf{\,+\,\texttt{enhancement}}}.}
\label{tab:ir10_50percent}
\end{table}

\paragraph{Large number of clients on CIFAR100-LT}
We extend CIFAR100-LT experiments to 50 and 100 clients, with an imbalance factor \(imb\_factor=100\) and data heterogeneity \(\alpha=0.1\). The participation fractions are set to 0.2 and 0.1, respectively. As the number of clients increases, class absences become increasingly severe. This setup aims to verify FedReLa’s robustness to extreme class imbalance and its compatibility with diverse algorithmic approaches.

\begin{table}[H]
\centering
\resizebox{\linewidth}{!}{
\begin{tabular}{lcccc}
\toprule
\textbf{Method} & \multicolumn{2}{c}{\textbf{50 Clients}} & \multicolumn{2}{c}{\textbf{100 Clients}} \\
\cmidrule(lr){2-3} \cmidrule(lr){4-5}
& \textbf{Overall Accuracy (\%)} & \textbf{H/M/F Accuracy (\%)} & \textbf{Overall Accuracy (\%)} & \textbf{H/M/F Accuracy (\%)} \\
\midrule
FedLC \citep{fedlc} & 32.81 & 56.20/32.42/9.74 & 23.72 & 46.41/20.74/4.23 \\
+FedReLa & \textbf{34.76} & 49.92/\textbf{35.24/17.21} & \textbf{25.13} & 39.82/\textbf{28.34/7.02} \\
\midrule
FedYoYo \citep{fedyoyo2025} & 40.89 & 54.32/41.95/24.12 & 30.73 & 34.12/29.64/27.92 \\
+FedReLa & \textbf{41.31} & 53.62/\textbf{42.24/25.91} & \textbf{32.13} & 33.90/\textbf{32.72/29.22} \\
\midrule
FedETF & 31.89 & 60.67/40.45/9.22 & 28.71 & 59.00/33.72/9.73 \\
+FedReLa & \textbf{33.94} & 55.12/\textbf{43.61/15.12} & \textbf{31.50} & 52.21/\textbf{38.73/16.53} \\
\midrule
FedLoGe & 34.83 & 57.12/42.21/17.29 & 33.08 & 62.71/38.78/13.75 \\
+FedReLa & \textbf{35.62} & 57.01/\textbf{44.32/18.00} & \textbf{34.32} & 59.90/\textbf{42.22/16.00} \\
\bottomrule
\end{tabular}
}
\caption{Performance comparison on CIFAR100-LT with 50/100 clients (H=Head, M=Medium, F=Few-shot). FedReLa consistently boosts few-shot accuracy across baselines.}
\label{tab:cifar100_lt_clients}
\end{table}

Table \ref{tab:cifar100_lt_clients} demonstrates three key conclusions: (1) FedReLa delivers consistent enhancements for all baselines; (2) Even with 100 clients (a large-scale client setup), FedReLa maintains performance gains, validating its scalability to distributed environments with numerous clients and its robustness to severe class absence; (3) The consistent improvements across diverse algorithmic paradigms further confirm FedReLa’s algorithm-agnostic property as a data-level plug-in.

\paragraph{Tail-Class Accuracy Gain on CIFAR10-LT Under Extreme Heterogeneity}
Under \(\alpha=0.1\), ~70\% of tail classes are absent from clients, simulating extreme real-world heterogeneity.

\begin{table}[H]
\centering
\resizebox{0.7\textwidth}{!}{
\begin{tabular}{ll|ccc|ccc}
\toprule
\textbf{Method} & \textbf{IF} &
\multicolumn{3}{c|}{\textbf{Med+Tail Gain (\%)}} &
\multicolumn{3}{c}{\textbf{Overall Gain (\%)}} \\
\cmidrule(lr){3-5} \cmidrule(lr){6-8}
+FedReLa & & $\alpha{=}0.1$ & $\alpha{=}0.3$ & $\alpha{=}10$ & $\alpha{=}0.1$ & $\alpha{=}0.3$ & $\alpha{=}10$ \\
\midrule
FedETF & 50 & \textbf{+54.34} & +16.17 & +18.18 & \textbf{+13.75} & +2.99 & +4.66 \\
FedETF & 100 & \textbf{+20.28} & +8.13 & +17.99 & +4.55 & +3.12 & \textbf{+6.48} \\
\midrule
FedLoGe & 50 & +4.50 & +10.45 & \textbf{+13.72} & \textbf{+6.56} & +2.04 & +2.30 \\
FedLoGe & 100 & \textbf{+25.78} & +12.08 & +17.61 & \textbf{+17.08} & +2.34 & +4.81 \\
\bottomrule
\end{tabular}
}
\caption{Accuracy gains of +FedReLa on CIFAR-10-LT vs.\ the matching baseline in Table~\ref{tab:lt}. Med+Tail gain is the sum of medium- and tail-class improvements.}
\label{tab:tail_improvement}
\end{table}

As presented in Table~\ref{tab:tail_improvement}, FedReLa yields its largest accuracy improvements under the most extreme long-tailed and heterogeneous settings, with gains in both overall and medium-/tail-class performance becoming markedly more pronounced as the imbalance factor increases and client data grow more non-IID. In contrast, improvements are comparatively modest under milder regimes where the long-tail effect is weaker and client distributions are closer to IID. These results confirm that FedReLa is explicitly designed for federated scenarios with severe class absence and strong heterogeneity: its deferred re-labeling strategy and calibrated posterior estimation enable selective identification of majority-class samples that resemble absent minorities, avoiding blind re-labeling and delivering robust performance gains where baseline methods struggle most.

\subsection{Sensitivity analysis}\label{sec:sensitivity}

We perform the sensitivity analysis of re-labeling threshold $t_{\rm re}^{(k)}$ on long-tailed CIFAR-10 with $\rm IF = 50$. On each client, the class-wise threshold $t_{\rm re}^{(k)}$ is determined by the top-$\tau_{\rm}$ \% of z-scores. The threshold $t_{\rm re}^{(k)}$ controls the re-labeling strength (the amount of re-labeled samples) as demonstrated in Figure \ref{fig:sensitive}(a). This serves as a safeguard to regulate the number of samples re-labeled by FedReLa. For instance, using the top 1\% z-score as the re-labeling threshold limits the number of re-labeled samples to be less than $1\%$ of local data. 
Figure \ref{fig:sensitive}(a) shows that the amount of re-labeled samples scales linearly with top-$\tau$ percentiles. 

In Figure \ref{fig:sensitive}(b), when $\tau \leq 5$, tail-class performance gains outweigh head-class losses. When $\tau> 5$, medium-class accuracy steadily improves and head-class accuracy continues to decline slowly, while tail-class accuracy remains relatively stable. The effect on performance of turning $t_{\rm re}^{(k)}$ up reveals that: (1) Initially, re-labeled head-class samples with a small $\tau$ mostly invade tail-class feature space. (2) After re-labeling these critical samples, further label re-allocating relieves the head-class invasion of the medium-class feature space. (3) FedReLa prioritizes re-labeling samples that most severely invade tail-class regions. We observe similar results on the CIFAR100-LT (Table \ref{tab:sensitivity} in the appendix). 

\begin{figure}[h!]
\centering
\includegraphics[width=0.95\textwidth]{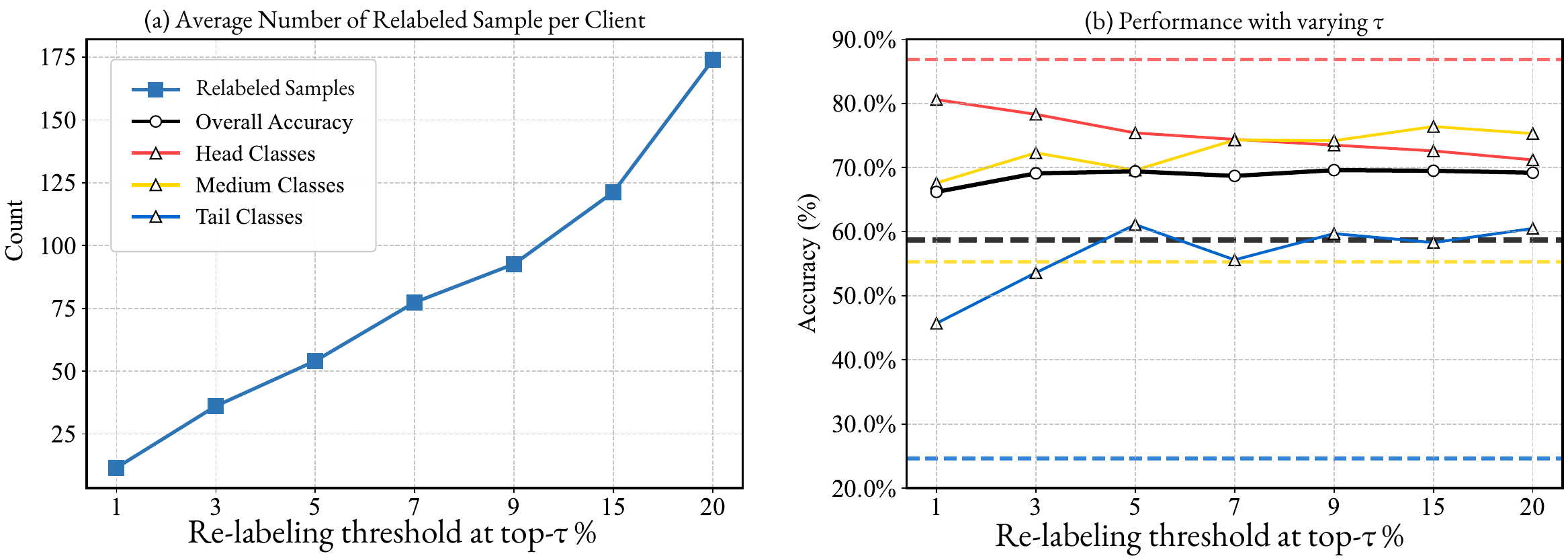}
\caption{Sensitive analysis respect to $\tau$, which controls the re-labeling strength.}
\label{fig:sensitive}
\end{figure}

The threshold-tuning capability allows FedReLa to deliver customized class-wise enhancement, prioritizing tail-class gains ($\tau = 5$) while preserving overall performance. This strategic trade-off (suppressing overprivileged head classes to boost tails) is a unique advantage over static algorithm-level approaches \citep{Fed-ETF, FedLoGe}, as evidenced by the accuracy curves surpassing the baseline (dashed lines) in critical regions. In practice, we can tune the trade-off through $\tau$ depending on how much importance we place on minority-class performance.

Recall the conclusion from observations on CIFAR-10-LT: (1) Initially, re-labeled head-class samples with a small proportion $\tau$ mostly invade tail-class feature space. (2) After re-labeling these critical samples, further label re-allocating relieves the head-class invasion of the medium-class feature space. (3) FedReLa prioritizes re-labeling samples that most severely invade tail-class regions. We observe similar results on the CIFAR-100-LT dataset, which are presented in Table~\ref{tab:sensitivity}. We anticipate that the optimal parameters will exhibit slight differences across datasets with varying posterior probability distributions and degrees of class overlap. When $\tau = 3$, FedReLa achieves maximum performance gain on CIFAR-100-LT, where the degree of class overlap is more severe. Although the parameter range 1--20\% consistently provides performance gain on both CIFAR-10 and CIFAR-100, with the principle of minimizing data-editing, \textbf{we recommend using slightly conservative relabeling strength (3\%--5\%).}

\begin{table}[H]
    \centering
    \caption{Sensitivity analysis on CIFAR-100-LT (IF=50, $\alpha=0.1$). Original denotes the FedETF baseline; other columns use FedReLa with top-$\tau$\% relabeling. The $\tau=5\%$ column matches the +FedReLa entry in Table~\ref{tab:lt}.}
    \label{tab:sensitivity}
    \begin{tabular}{lcccccccc}
        \toprule
        top-$\tau$\% & Original & 1 & 3 & 5 & 7 & 9 & 15 & 20 \\
        \midrule
        Overall     & 42.4 & 44.6 & \textbf{45.1} & 44.7 & 44.6 & 44.9 & 44.5 & 44.6 \\
        Many-shot   & 63.0 & 62.2 & 57.8 & 56.8 & 57.1 & 56.8 & 56.2 & 56.0 \\
        Medium-shot & 44.7 & 45.5 & 49.2 & 48.6 & 49.8 & 49.4 & 50.7 & 51.1 \\
        Few-shot    & 20.1 & 21.8 & 28.4 & 29.2 & 27.5 & 28.9 & 27.6 & 26.8 \\
        Relabeled   & 0    & 51   & 157  & 194  & 216  & 276  & 317  & 364  \\
        \bottomrule
    \end{tabular}
\end{table}

The threshold-tuning capability enables FedReLa to deliver customized class-wise enhancements, prioritizing tail-class gains while preserving overall performance. This strategic trade-off (suppressing overprivileged head classes to boost tails) is a unique advantage over static algorithm-level approaches \citep{Fed-ETF, FedLoGe}, as evidenced by the accuracy curves surpassing the baseline (dashed lines) in critical regions. Again, in practice, we can tune the trade-off through $t_{\rm re}^{(k)}$ by $\tau$ depending on how much importance we place on minority-class performance.

We report the sensitivity of the re-labeling start round $T_{\mathrm{relabel}}$ on long-tailed CIFAR-10/100 (IF=100) in Table~\ref{tab:trelabel_sensitivity}.


\begin{table}[H]
    \centering
    \caption{Sensitivity analysis of the relabeling start round $T_{\mathrm{relabel}}$ on long-tailed CIFAR-10/100 (IF=100). $T_{\mathrm{relabel}}$ is expressed as a fraction of the total training rounds $R$. Entries report Overall / Head / Med / Tail accuracy (\%).}
    \label{tab:trelabel_sensitivity}
    \resizebox{0.85\textwidth}{!}{
    \begin{tabular}{llcc}
        \toprule
        Dataset & $T_{\mathrm{relabel}}$ &
        $\alpha=1$ (O / H / M / T) &
        $\alpha=10$ (O / H / M / T) \\
        \midrule
        \multirow{4}{*}{CIFAR-10}
        & 40\%$R$ & 75.55 / 90.84 / 73.72 / 65.46 & 75.31 / 91.72 / 75.78 / 62.65 \\
        & 60\%$R$ & 76.52 / 91.04 / 75.02 / 66.76 & 76.80 / 92.95 / 76.77 / 64.71 \\
        & 80\%$R$ & \textbf{76.65} / 91.43 / 75.72 / \textbf{66.26} & \textbf{76.95} / 92.90 / 77.21 / \textbf{64.79} \\
        & FedETF baseline & 74.23 / 93.17 / 74.83 / 59.57 & 74.80 / 93.67 / 76.03 / 59.72 \\
        \midrule
        \multirow{4}{*}{CIFAR-100}
        & 40\%$R$ & 40.00 / 67.62 / 53.70 / 16.63 & 40.80 / 65.98 / 56.80 / 15.61 \\
        & 60\%$R$ & 43.53 / 64.43 / 49.83 / 28.06 & 44.01 / 64.32 / 54.40 / 25.11 \\
        & 80\%$R$ & \textbf{43.84} / 65.13 / 50.12 / \textbf{28.12} & \textbf{44.54} / 67.53 / \textbf{54.01} / \textbf{24.52} \\
        & FedETF baseline & 42.60 / 72.41 / 50.11 / 21.19 & 42.81 / 71.73 / 51.44 / 19.82 \\
        \bottomrule
    \end{tabular}
    }
\end{table}

Key observations:
\begin{itemize}
    \item Performance gain over the baseline is relatively insensitive to $T_{\mathrm{relabel}}$ once it reaches 60\% of the total training rounds, across different datasets and heterogeneity levels.
    \item Results show clear gains in both overall accuracy and minority (tail-class) accuracy compared to the baseline across all tested scenarios.
    \item We recommend setting $T_{\mathrm{relabel}} \approx 80\%R$ for robust performance across various datasets and heterogeneity levels.
\end{itemize}

\section{Ablation study}\label{sec:ablation}

\paragraph{Ablation study on the importance of Z-score standardization}
Z-score standardization is critical for enabling FedReLa to utilize the underestimated posterior probabilities output by biased models. To validate its necessity, we conducted ablation experiments on CIFAR-10-LT ($\alpha=0.1$, $\mathrm{IF}=50$) without standardization, and directly using posterior probabilities as flip probabilities.
\begin{table}[H]
\centering
\begin{tabular}{lcccc}
\textbf{Method} & Overall & \textbf{Many-shot} & \textbf{Medium-shot} & \textbf{Few-shot} \\
\hline
FedLOGE & 57.5 & 83.0 & 61.1 & 19.8 \\
\hline
+FedReLa & \textbf{70.0} & 76.0 & \textbf{72.7} & \textbf{59.4} \\
\hline
+FedReLa w/o Z-score & 59.7 & 82.1 & 72.3 & 17.4 \\
\hline
\end{tabular}
\caption{Performance of FedReLa with/without Z-score Standardization.}
\label{tab:zscore}
\end{table}

Without z-score standardization, the Few-shot performance fails to show improvement. This is attributed to the fact that the posterior probabilities are underestimated by the biased global model for tail classes and are typically extremely small. Directly utilizing them as flipping probabilities hinders the effective conversion of these samples into global minority classes. Meanwhile, the Medium-shot performance exhibits improvement as these classes possess more samples than tail classes, resulting in the model underestimating their posterior probabilities to a lesser extent. Thus, head-class samples with similar features are preferentially flipped to the medium class, rather than to the tail classes with tiny posterior probabilities.

Applying z-score standardization to the underestimated posterior probabilities enables a balanced label re-allocating behavior, which achieves a better balanced trade-off among the performance of Head, Medium, and Tail classes. Ultimately, this contributes to the superior Overall accuracy.

\paragraph{Ablation study on data-heterogeneity.}
To evaluate FedReLa's performance under higher imbalance ratios across varying degrees of data heterogeneity, we increased the imbalance ratio (IR) to 20 and the number of clients to $K=100$ on the Fashion-MNIST dataset with 3 minority classes.

Results in Table \ref{tab:ablation} depict consistent performance improvements by FedReLa across different levels of data heterogeneity on the Fashion-MNIST dataset for each algorithm-level method. This highlights the robustness of FedReLa in mitigating the impact of data heterogeneity through enhancements to both data and classifiers.

Improved percentage shows that the improvement achieved by FedReLa increases with higher data heterogeneity, indicating that FedReLa-boosted models exhibit significantly improved robustness to heterogeneous data distributions compared to baseline methods. 

As $\alpha$ decreases (i.e., heterogeneity increases), FedReLa demonstrates progressively greater improvements in both minority-class accuracy (+8.13\% to +35.40\%) and overall accuracy (+1.83\% to +10.82\%), with the most significant gains observed under extreme non-IID scenarios ($\alpha=0.1$). 
While baseline methods exhibit varied sensitivity to heterogeneity, CLIMB shows inherent robustness but limited enhancement headroom, and MOON suffers significant performance drops at $\alpha=0.1$. Yet FedReLa consistently mitigates these limitations through adaptive calibration, offering consistent enhancement. 
Notably, FedReLa reduces minority-class accuracy disparities by 23-37\% across $\alpha\leq 1$ while maintaining global model stability, particularly excelling in balancing the accuracy trade-off between dominant and rare classes. 
These results position FedReLa as a versatile solution for real-world federated learning deployments, offering three key advantages: 1) enhanced robustness to severe data heterogeneity without requiring client-specific tuning, 2) compatibility with existing aggregation frameworks, and 3) simultaneous optimization of both class-balanced and global model performance in non-IID environments.
\newpage


\begin{table}[H]
    \centering
    \caption{Sensitivity to relabel refresh period on CIFAR-10/100 (overall test accuracy in \%).
    Columns ``FedETF'' and ``+FedReLa (one-shot)'' report the baseline and a single end-of-training relabeling pass for reference.
    Columns ``Period~1''--``Period~5'' refresh the relabel mapping every 1--5 communication rounds, respectively, and report the best overall accuracy achieved during training.}
    \label{tab:period_sensitivity}
    \resizebox{0.85\textwidth}{!}{
    \begin{tabular}{l|ccccccc}
        \toprule
        & FedETF & +FedReLa (one-shot) & Period~1 & Period~2 & Period~3 & Period~4 & Period~5 \\
        \midrule
        CIFAR-10, IF=100, $\alpha=0.3$ & 42.88 & 46.00 & 48.53 & 48.78 & \textbf{49.17} & 48.56 & 48.92 \\
        CIFAR-100, IF=50, $\alpha=0.1$ & 42.35 & 44.71 & \textbf{45.04} & 44.70 & 44.48 & 44.67 & 44.57 \\
        \bottomrule
    \end{tabular}
    }
\end{table}

\paragraph{One-shot vs. Periodic}
We compare a single end-of-training relabeling pass (one-shot) with periodically refreshed mappings (Period~1--5), where the relabeling rule is updated every few communication rounds.
Periodic refresh incurs a modest additional local computation cost, but Table~\ref{tab:period_sensitivity} shows that it can pay off: on CIFAR-10, periodic strategies consistently yield stronger overall gains than one-shot alone, whereas on CIFAR-100 the extra benefit is more limited.
One-shot relabeling already improves over the FedETF baseline on both datasets; when the compute budget allows, adopting a periodic refresh schedule is a practical way to push performance further, especially in the CIFAR-10 regime.

\begin{table}[H]
\centering
\resizebox{0.8\textwidth}{!}{
\begin{tabular}{|c|cccc|}
\hline
        & \multicolumn{2}{c|}{Original (+FedReLa) Performance}           & \multicolumn{2}{c|}{Improved Percentage (\%)} \\ \hline
        & \multicolumn{1}{c|}{Minority Accuracy} & \multicolumn{1}{c|}{Overall Accuracy} & \multicolumn{1}{c|}{Minority Accuracy} & Overall Accuracy \\ \hline
        & \multicolumn{4}{c|}{$\alpha=10$} \\ \hline
FedAvg  & \multicolumn{1}{c|}{51.70 (81.83)}    & \multicolumn{1}{c|}{79.57 (84.56)}   & \multicolumn{1}{c|}{+30.13}    &     \multicolumn{1}{c|}{+4.99}             \\ \cline{1-1}
FedProx & \multicolumn{1}{c|}{51.57 (81.83)}    & \multicolumn{1}{c|}{79.35 (84.40) }   & \multicolumn{1}{c|}{+30.26}    &    \multicolumn{1}{c|}{+5.05}\\ \cline{1-1}
FedNova & \multicolumn{1}{c|}{52.00 (82.27)}    & \multicolumn{1}{c|}{79.35 (84.61)}   & \multicolumn{1}{c|}{+30.27}    &      \multicolumn{1}{c|}{+5.26}            \\ \cline{1-1}
MOON    & \multicolumn{1}{c|}{45.77 (81.17)}    & \multicolumn{1}{c|}{78.35 (85.60)}   & \multicolumn{1}{c|}{+35.40}    &      \multicolumn{1}{c|}{+7.25}            \\ \cline{1-1}
CLIMB   & \multicolumn{1}{c|}{55.80 (69.57)}    & \multicolumn{1}{c|}{82.39 (86.14)}   & \multicolumn{1}{c|}{+13.77}    &      \multicolumn{1}{c|}{+3.75}            \\ \hline & \multicolumn{4}{c|}{$\alpha=5$} \\ \hline
FedAvg  & \multicolumn{1}{c|}{45.27 (78.53)}    & \multicolumn{1}{c|}{77.80 (84.42)}   & \multicolumn{1}{c|}{+33.26}    &     \multicolumn{1}{c|}{+6.62}             \\ \cline{1-1}
FedProx & \multicolumn{1}{c|}{45.57 (78.03)}    & \multicolumn{1}{c|}{77.90 (84.27)}   & \multicolumn{1}{c|}{+32.46}    &    \multicolumn{1}{c|}{+6.37}\\ \cline{1-1}
FedNova & \multicolumn{1}{c|}{46.10 (78.60)}    & \multicolumn{1}{c|}{78.18 (84.30)}   & \multicolumn{1}{c|}{+32.50}    &      \multicolumn{1}{c|}{+6.12}            \\ \cline{1-1}
MOON    & \multicolumn{1}{c|}{45.57 (79.73)}    & \multicolumn{1}{c|}{78.21 (85.51)}   & \multicolumn{1}{c|}{+34.16}    &      \multicolumn{1}{c|}{+7.30}            \\ \cline{1-1}
CLIMB   & \multicolumn{1}{c|}{56.10 (69.53)}    & \multicolumn{1}{c|}{82.67 (86.35)}   & \multicolumn{1}{c|}{+13.43}    &      \multicolumn{1}{c|}{+3.68}            \\ \hline
& \multicolumn{4}{c|}{$\alpha=1$} \\ \hline
FedAvg  & \multicolumn{1}{c|}{50.67 (75.57)}    & \multicolumn{1}{c|}{79.06 (84.12)}   & \multicolumn{1}{c|}{+24.90}    &     \multicolumn{1}{c|}{+5.06}             \\ \cline{1-1}
FedProx & \multicolumn{1}{c|}{50.10 (74.60)}    & \multicolumn{1}{c|}{78.95 (84.25)}   & \multicolumn{1}{c|}{+24.50}    &    \multicolumn{1}{c|}{+5.30}\\ \cline{1-1}
FedNova & \multicolumn{1}{c|}{50.10 (75.93)}    & \multicolumn{1}{c|}{78.91 (84.47)}   & \multicolumn{1}{c|}{+25.83}    &      \multicolumn{1}{c|}{+5.56}            \\ \cline{1-1}
MOON    & \multicolumn{1}{c|}{44.87 (74.97)}    & \multicolumn{1}{c|}{77.94 (84.95)}   & \multicolumn{1}{c|}{+30.10}    &      \multicolumn{1}{c|}{+7.01}            \\ \cline{1-1}
CLIMB   & \multicolumn{1}{c|}{61.17(69.3)}    & \multicolumn{1}{c|}{83.78(85.93)}   & \multicolumn{1}{c|}{+8.13}    &      \multicolumn{1}{c|}{+2.15}            \\ \hline
        & \multicolumn{4}{c|}{$\alpha=0.3$}\\ \hline
FedAvg  & \multicolumn{1}{c|}{50.50 (75.70)}    & \multicolumn{1}{c|}{78.46 (83.84)}   & \multicolumn{1}{c|}{+25.20}    &     \multicolumn{1}{c|}{+5.38}             \\ \cline{1-1}
FedProx & \multicolumn{1}{c|}{50.00 (74.90)}    & \multicolumn{1}{c|}{78.48 (83.63) }   & \multicolumn{1}{c|}{+24.90}    &    \multicolumn{1}{c|}{+5.15}\\ \cline{1-1}
FedNova & \multicolumn{1}{c|}{55.03 (77.90)}    & \multicolumn{1}{c|}{76.67 (84.46)}   & \multicolumn{1}{c|}{+22.87}    &      \multicolumn{1}{c|}{+7.79}            \\ \cline{1-1}
MOON    & \multicolumn{1}{c|}{44.43(74.77) }    & \multicolumn{1}{c|}{77.12(84.47) }   & \multicolumn{1}{c|}{+30.34}    &      \multicolumn{1}{c|}{+7.35}            \\ \cline{1-1}
CLIMB   & \multicolumn{1}{c|}{53.43 (64.70)}    & \multicolumn{1}{c|}{81.42 (84.56)}   & \multicolumn{1}{c|}{+11.27}    &      \multicolumn{1}{c|}{+3.14}            \\ \hline
        & \multicolumn{4}{c|}{$\alpha=0.1$}\\ \hline
FedAvg  & \multicolumn{1}{c|}{33.83 (67.60)}    & \multicolumn{1}{c|}{68.43 (79.25)}   & \multicolumn{1}{c|}{+33.77}    &    \multicolumn{1}{c|}{+10.82} \\ \cline{1-1}
FedProx & \multicolumn{1}{c|}{34.40 (68.27)}    & \multicolumn{1}{c|}{69.37 (79.35)}   & \multicolumn{1}{c|}{+33.87}    &      \multicolumn{1}{c|}{+9.98}             \\ \cline{1-1}
FedNova & \multicolumn{1}{c|}{70.43 (82.83)}    & \multicolumn{1}{c|}{74.25 (82.10)}   & \multicolumn{1}{c|}{+12.40}    &     \multicolumn{1}{c|}{+7.85}\\ \cline{1-1}
MOON    & \multicolumn{1}{c|}{22.59 (45.82)}    & \multicolumn{1}{c|}{67.44 (77.88)}   & \multicolumn{1}{c|}{+23.23}    &    \multicolumn{1}{c|}{+10.44} \\ \cline{1-1}
CLIMB   & \multicolumn{1}{c|}{56.97 (65.20)}    & \multicolumn{1}{c|}{81.78 (83.61)}   & \multicolumn{1}{c|}{+8.23}    &      \multicolumn{1}{c|}{+1.83}           \\ \hline
\end{tabular}
}
\caption{Ablation study on $\alpha$. The overall accuracy and average accuracy of minority classes (in \%) on step-wise Fashion-MNIST with 3 minority classes (30\%) for $\text{IR} = 20$ with 100 clients. The results in brackets show the FedReLa enhanced performance.}\label{tab:ablation}
\end{table}

\end{document}